%% file: main.tex
\documentclass{article}

\input{packages}

\usepackage[final]{ewrl_2026}

\input{commands}

\input{math_commands.tex}

\title{Robust Successor Features}

\author{%
  Erik Nikulski \\
  Department of Engineering \\
  Universitat Pompeu Fabra \\
  \texttt{erik.nikulski@upf.edu}
  \And
  Yamen Habib \\
  Department of Engineering \\
  Universitat Pompeu Fabra \\
  \texttt{yamen.habib@upf.edu}
  \And
  Vicen\c{c} Gomez \\
  Department of Engineering \\
  Universitat Pompeu Fabra \\
  \texttt{vicen.gomez@upf.edu}
  \And
  Anders Jonsson \\
  Department of Engineering \\
  Universitat Pompeu Fabra \\
  \texttt{anders.jonsson@upf.edu}
  \And
  Rub\'en Moreno-Bote \\
  Department of Engineering \\
  Universitat Pompeu Fabra \\
  \texttt{ruben.moreno@upf.edu}
  \And
  Javier Segovia-Aguas \\
  Department of Engineering \\
  Universitat Pompeu Fabra \\
  \texttt{javier.segovia@upf.edu}
}

\date{}

\begin{document}

\maketitle

\input{0_abstract}
\input{1_introduction}
\input{2_preliminaries}

\input{3_robust_successor_features}
\input{4_training_usfa}

\input{5_experiments}
\input{7_discussion}

\input{acknowledgements}

\bibliography{references.bib}
\bibliographystyle{plainnat}

\newpage
\appendix

\pagebreak
\input{app_C_generalization}
\input{6_related_work}
\input{app_D_few_shot_experments}


\end{document}

%% file: packages.tex
\usepackage{amsmath} 
\usepackage{amsfonts} 
\usepackage{amssymb} 
\usepackage{tikz} 
\usepackage{hyperref} 
\usepackage{listings} 
\usepackage{xcolor} 
\usepackage{caption}
\usepackage{subcaption}
\usepackage{algorithm}
\usepackage{algpseudocode}


%% file: commands.tex
\newtheorem{theorem}{Theorem}
\newtheorem{proposition}{Proposition}
\newtheorem{definition}{Definition}
\newtheorem{lemma}{Lemma}
\newtheorem{remark}{Remark}

%% file: math_commands.tex
\usepackage{amsmath,amsfonts,bm}

\def\eqref#1{equation~\ref{#1}}

\def\1{\bm{1}}

\DeclareMathAlphabet{\mathsfit}{\encodingdefault}{\sfdefault}{m}{sl}
\SetMathAlphabet{\mathsfit}{bold}{\encodingdefault}{\sfdefault}{bx}{n}



%% file: 0_abstract.tex
\begin{abstract}
Generalization in Reinforcement Learning (RL) refers to the ability to execute close-to-optimal policies in unseen tasks after the agent has been trained on a different set of tasks. 
Building on the seminal work of the {\em successor representation} and further adaptations with function approximation, Transfer in RL has traditionally focused on generalizing to tasks that only differ in the reward function.
A decade after the introduction of the successor representation, Robust RL emerged simultaneously from several articles in the field of operations research. 
In Robust RL, the transition kernel is unknown, and the goal is to maximize the expected reward under this uncertainty.
Our work unifies these two paradigms through {\em robust successor features}, which generalize across both the reward function and the transition kernel, under the assumption that tasks are linear {\em Markov Decision Processes}.
We derive a bound on {\em Generalized Policy Improvement} (GPI) that explicitly quantifies how performance degrades with the mismatch between transition kernels, recovering existing successor-feature guarantees when dynamics are shared.
Finally, the generalization capabilities of robust successor features are validated on several grid-based benchmarks and compared to previous alternatives that focus solely on either the reward or the transition kernel.
\end{abstract}

%% file: 1_introduction.tex
\section{Introduction}
\label{sec:intro}

Reinforcement learning (RL) agents \citep{sutton1998reinforcement} that perform well on the task they were trained on often fail when that task shifts even slightly, whether the shift is in what the agent is rewarded for or in how the environment responds to its actions.
Deep RL policies that master Atari games \citep{mnih2015human}, for instance, break when the paddle in Breakout is shifted by just a few pixels \citep{kansky2017schema}.
Two lines of research have each addressed one half of this problem in isolation.

In {\em Transfer RL}, the goal is a policy that easily adapts to changes in the reward signal $\mathcal{R}$.
Since learning a new policy from scratch for every reward variation is sample-inefficient, the aim is instead to reuse experience across the space of possible rewards and adapt to a new one zero-shot, or from a handful of samples if the new reward is unknown.
A successful attempt for tabular cases was the {\em successor representation} (SR) \citep{dayan1993improving}, which linearly approximates the reward with the future occupancies over states.
{\em Successor features} (SFs) \citep{barreto2017successor} extended the SR framework with features $\phi$ defined over transition triplets, turning the problem into maximizing a linear combination of future occupancies over features; this decoupling of task-dependent weights from general future occupancies is what enables zero-/few-shot adaptation.
{\em Universal successor features approximators} \citep{borsa2018universal} then used deep RL to approximate SFs across policy spaces, extending the mechanism to environments with high-dimensional inputs (e.g., pixel-based observations).

In {\em Robust RL} \citep{morimoto2005robust, iyengar2005robust, nilim2005robust}, the reward signal is fixed, but the transition kernel is not a single fixed model; instead it is only assumed to lie within an uncertainty (or ambiguity) set of plausible kernels, reflecting model misspecification, a mismatch between training and deployment dynamics, or even an adversary perturbing the dynamics \citep{pinto2017robust}. The policy must therefore be robust to the worst-case or expected transition kernel within this set, generalizing across the dynamics of the system rather than across its reward.

Despite this mirrored structure, the two lines of work remain largely disjoint: transfer methods that generalize across rewards typically assume a shared transition kernel, while robust RL methods that generalize across dynamics typically assume a fixed reward.
To the best of our knowledge, no method in the successor-feature and GPI setting offers a suboptimality guarantee when the reference and target tasks differ in both their reward and their transition kernel. 

In this work, we close this gap by unifying transfer RL and robust RL into a single framework, {\em robust successor features} (RSF), under the assumption of feature-based linear MDPs \citep{yang2019sample, jin2020provably}. RSF factorizes the value function into a task-agnostic representation and task-specific linear weights, one set for the reward and one for the transition kernel. Combined with a {\em Generalized Policy Improvement} (GPI) policy \citep{barreto2017successor}, this single learned representation generalizes zero-shot across changes in \emph{both} axes at once. Our main contributions are as follows:
\begin{itemize}
\item {\bf Robust successor features}, a formal characterization of transfer RL via successor features for linear MDPs with independent feature spaces for the reward and the transition kernel.
\item A {\bf theoretical guarantee} extending the GPI bound of \citet{barreto2017successor} (their Theorem 2) to reference and target tasks that differ in their transition kernels, in which the dynamics mismatch enters through an explicit term $\kappa_{\bf pq} = \min\!\bigl(2,\, L_\varphi \|{\bf p}-{\bf q}\|\bigr)$ that vanishes as the reference dynamics approach the target ones, recovering the reward-only bound of \citet{barreto2017successor} as a special case.
\item A {\bf DDQN-based algorithm} for learning robust SFs, validated on grid-based experiments where RSF generalizes to unseen combinations of reward and dynamics with $93.6\%$ policy accuracy, outperforming both a USFA baseline ($86.7\%$) and ablations of our architecture trained under reward-only ($86.8\%$) or dynamics-only ($91.3\%$) variation.
\end{itemize}

{\bf Structure}. This article is organized as follows: we first introduce the formalism of MDPs, RL, and SFs. Then, we define our {\em robust successor features} that approximate the SFs for any linear MDP that shares features for the reward and the transition kernel. We continue with our main algorithm and experiments based on a 2D grid-based environment, and conclude with a discussion and future work.

{\bf Notation}. For a finite set ${\cal X}$, we use $\Delta({\cal X})$ to denote an arbitrary probability distribution over ${\cal X}$, and ${\cal U}({\cal X})$ for the uniform probability distribution.
Column vectors are formatted in bold, for example, $\textbf{v} \in \mathbb{R}^d$ is a $d$-dimensional vector of real numbers; and $\textbf{v}^\top$ stands for the transpose of ${\bf v}$ (i.e., a row vector).
We use $||\cdot||_1$, $||\cdot||$ and $||\cdot||_\infty$, for the $l_1$, $l_2$, and $l_\infty$-norms, respectively.

%% file: 2_preliminaries.tex
\section{Preliminaries}

This section covers the preliminary work for transfer and robust reinforcement learning that are closest to our contributions, as well as the main framework for modeling and formalizing these problems. Other related work is expanded in Appendix~\ref{sec:related_work}.

\subsection{Markov Decision Process (MDP)}
\label{}
A discounted {\em Markov Decision Process} (MDP) \citep{puterman2014markov} is a model 
${\cal M}= \langle {\cal S}, {\cal A}, {\cal P}, {\cal R}, \gamma\rangle$,  
where ${\cal S}$ is the state space, ${\cal A}$ is the action space, and ${\cal P}:{\cal S}\times{\cal A} \rightarrow \Delta({\cal S})$ is the transition kernel with $\Delta$ denoting the simplex over the state space.
Thus, taking an action in a given state induces a probability distribution over the subsequent states.
The reward function ${\cal R} : {\cal S}\times{\cal A}\times{\cal S} \rightarrow \mathbb{R}$ provides a scalar reward for each transition $s\xrightarrow{a}s'$, and $\gamma\in[0,1)$ is the discount factor.

A solution to an MDP is a policy $\pi: {\cal S}\rightarrow {\cal A}$ mapping every state $s\in{\cal S}$ to an action $a\in{\cal A}$ that maximizes the expected (discounted) reward. 
The quality function for a given policy $\pi$, when applying action $a$ in state $s$, is given as

\begin{equation}
Q^\pi(s,a) = \mathbb{E}^\pi\left[\sum^\infty_{t=0} \gamma^t{\cal R}(S_t,A_t,S_{t+1}) \middle| S_0=s,A_0=a\right]
\label{eq:q_values}
\end{equation}

At any state $s_t$ at time $t$, the {\em greedy} policy chooses the action $a_t$ with highest expected return: $$a_t\leftarrow \pi(s_t) = \arg\max_{a\in{\cal A}} Q^\pi(s_t,a).$$ 
Alternatively, a \emph{stochastic} policy samples actions from a probability distribution over the action space, given the current state.
We refer to $\pi(a|s)$ as the probability of choosing action $a\in{\cal A}$ in a state $s\in{\cal S}$. 
For instance, a {\em softmax} policy  $$a_t\sim \pi(\cdot|s_t) = \frac{e^{Q(s_t,\cdot)/\beta}}{\sum_{a\in{\cal A}} e^{Q(s_t,a)/\beta}},$$ draws actions from the exponentiated values, which are balanced by a temperature parameter $\beta$. If $\beta \rightarrow \infty$, the policy will be a uniform distribution over the action space, while $\beta \rightarrow 0$ recovers the greedy policy.


\subsection{Universal Successor Features (USFs)}


\citet{barreto2017successor} express the expected immediate reward in any given transition as a linear combination of features $\phi\in\mathbb{R}^d$ over the transition, and the task weights $\textbf{w}\in\mathbb{R}^d$. Formally, the expected immediate reward is

\begin{equation}
\mathbb{E}\left[{\cal R}_\textbf{w}(s,a,s')\right] = r_\textbf{w}(s,a,s') = \phi(s,a,s')^\top \textbf{w} \quad \forall s,s'\in{\cal S}, a\in{\cal A}.
\label{eq:linear_reward}
\end{equation}

The linearized reward function from Eq.~\ref{eq:linear_reward} updates the definition of Q-values in Eq.~\ref{eq:q_values} as follows,

\begin{align}
\nonumber Q^\pi_\textbf{w}(s,a) &= \mathbb{E}^\pi\left[\sum^\infty_{t=0} \gamma^t{\cal R}_w(S_t,A_t,S_{t+1}) \middle| S_0=s,A_0=a\right]\\
\nonumber &= \mathbb{E}^\pi\left[\sum^\infty_{t=0} \gamma^t \phi(S_t,A_t,S_{t+1})^\top\textbf{w} \middle| S_0=s,A_0=a\right]\\
&= \mathbb{E}^\pi\left[\sum^\infty_{t=0} \gamma^t \phi(S_t,A_t,S_{t+1}) \middle| S_0=s,A_0=a\right]^\top \textbf{w} \quad \forall s\in{\cal S}, a\in{\cal A}.
\label{eq:q_w_values}
\end{align}

The expectation in Eq.~\ref{eq:q_w_values} denotes the future discounted aggregation of features, which is simply referred to as {\em successor features} (SFs) and represented as

\begin{equation}
\psi^\pi(s,a) = \mathbb{E}^\pi\left[\sum^\infty_{t=0} \gamma^t \phi(S_t, A_t, S_{t+1}) \middle| S_0=s, A_0=a\right] \quad \forall s\in{\cal S}, a\in{\cal A}.
\label{eq:successor_features}
\end{equation}

Note that $\phi$ in Eq.~\ref{eq:successor_features} is playing the role of the reward function like ${\cal R}$ in Eq.~\ref{eq:q_values}. Thus, the Q-values can be expressed in matrix form as

\begin{equation}
Q^\pi_\textbf{w} = \left(\psi^\pi\right)^\top \textbf{w}.
\end{equation}

\citet{borsa2018universal} introduce the concept of {\em universal successor features}, which extends the previous equations by also parameterizing the Q-values with a policy encoding function $e: ({\cal S}\rightarrow {\cal A})\rightarrow \mathbb{R}^d$, that maps a policy $\pi$ to a vector $\textbf{z} = e(\pi) \in \mathbb{R}^d$. In this setting, $\psi^\pi(s,a) \equiv \psi(s,a,\textbf{z})$, hence $Q^\pi_\textbf{w}(s,a) \equiv Q(s,a,\textbf{w},\textbf{z})$, where we refer to $\textbf{z}$ as the policy for short, and Q-values are updated as follows

\begin{align}
\nonumber Q(s,a,\textbf{w},\textbf{z}) &= \psi(s,a,\textbf{z})^\top \textbf{w}\\
&= \mathbb{E}^\textbf{z}\left[\sum^\infty_{i=t} \gamma^{i-t}\phi(S_i,A_i,S_{i+1}) \middle| S_t=s, A_t=a\right]^\top \textbf{w} \quad \forall s\in{\cal S}, a\in{\cal A}, \textbf{z}, \textbf{w} \in \mathbb{R}^d.
\label{eq:usf_q_values}
\end{align}

Note that $z \leftarrow w$ makes $Q(s,a,w,z=w) \equiv Q(s,a,w)$, which is equivalent to the problem of {\em Universal Value Functions} (UVFs) \citep{schaul2015universal}, where the chosen policy is the one that solves the input task optimally.

This setting can exploit function approximation over the policies in $\psi$, where successor features generalize across the policy space, and new tasks can be solved in a zero-shot fashion. This is guaranteed when combined with {\em Generalized Policy Improvement} (GPI) which is defined for a new task $\textbf{w}'$ as

\begin{equation}
\pi(s) \in \arg\max_{a\in{\cal A}} \max_{\textbf{z}\in{\cal C}} \tilde{Q}(s,a,\textbf{w}',\textbf{z}) = \arg\max_{a\in{\cal A}}\max_{\textbf{z}\in{\cal C}} \tilde{\psi}(s,a,\textbf{z})^\top \textbf{w}' \quad \forall s \in {\cal S},
\end{equation}

where ${\cal C}\subset \mathbb{R}^d$ is a subset of tasks that generalizes UVFs, SF\&GPI, and their approximated counterparts.

\subsection{Robust Reinforcement Learning}

A task in Robust RL \citep{morimoto2005robust, iyengar2005robust, nilim2005robust} is often modeled as a finite robust MDP \citep{behzadian2021fast}.
This is defined as an MDP $\mathcal{M}$ with an unknown transition kernel ${\cal P} \in \Delta({\cal S})^{{\cal S}\times{\cal A}}$ meaning the environment's dynamics are not fixed but instead belong to a set of possible distributions.

The goal is to compute a policy $\pi : {\cal S} \rightarrow \Delta({\cal A})$ from a set of stationary policies $\Pi$ that maximizes the expected discounted cumulative reward for the worst-case transition probabilities ${\cal P}$, that is,
\begin{equation}
\max_{\pi\in\Pi}\min_{{\cal P}\in \Delta({\cal S})^{{\cal S}\times{\cal A}}}  \mathbb{E}^\pi_{\cal P} \left[\sum^\infty_{t=0} \gamma^t {\cal R}(S_t,A_t,S_{t+1}) \middle| S_0 \sim \nu_0\right],
\end{equation}

where the first state is sampled from the initial state distribution $\nu_0$, the actions are drawn from the policy $\pi$, and the next state depends on the given transition kernel ${\cal P}$. 

\textsc{Poly}-time approximations of the optimization problem exist to overcome the intractability of the general case \citep{wiesemann2013robust}. 
For instance, {\em ambiguity sets} are common in the RRL literature, where a transition kernel is estimated from samples and the environment considers each state-action pair $\bar{\cal P}_{s,a}$ (resp., state $\bar{\cal P}_{s}$) as an independent problem to optimize, choosing for each a probability distribution over the state space ${\cal P}_{s,a} \in \Delta({\cal S})$, such that, $||\bar{\cal P}-{\cal P}||_\infty$ is upper-bounded by a robustness budget $\kappa \geq 0$. 
More formally, an ambiguity set is ${\cal T} = \{{\cal P}\in \Delta({\cal S})^{{\cal S}\times {\cal A}} | {\cal P}_{s,a}\in \Delta({\cal S}), ||\bar{\cal P}_{s,a}-{\cal P}_{s,a}||_\infty \leq \kappa_{s,a}, s\in{\cal S}, a\in{\cal A}\}$, where each $(s,a)$ entry is restricted to a $\kappa_{s,a}$. Therefore, the optimization problem restricted to an ambiguity set is
\begin{equation}
\max_{\pi\in\Pi}\min_{{\cal P}\in {\cal T}}  \mathbb{E}^\pi_{\cal P} \left[\sum^\infty_{t=0} \gamma^t {\cal R}(S_t,A_t,S_{t+1}) \middle| S_0 \sim \nu_0\right].
\end{equation}

However, worst-case analysis is overly pessimistic as it assumes that the environment is adversarial to the agent. Less conservative approaches focus on the $\alpha$-quantile subset of the uncertainty set of transition kernels, solving the problem in expectation \citep{lin2022bayesian,xie2022robust}. These are closer to our work, where generalization to diverse environments is formalized in expectation over the ambiguity set,

\begin{equation}
\label{eq:our_rrl}
\max_{\pi} \mathbb{E}^\pi_{{\cal P}\sim{\cal T}} \left[\sum^\infty_{t=0} \gamma^t {\cal R}(S_t, A_t, S_{t+1}) \middle| S_0\sim\nu_0\right].
\end{equation}


%% file: 3_robust_successor_features.tex
\section{Robust Successor Features}

Successor feature frameworks explore the generalization across tasks \citep{barreto2017successor} and policies \citep{borsa2018universal} with approximations of the quality function. 
However, one of the main limitations is that both frameworks assume that all tasks share the same transition probabilities. 
In this work, we relax this assumption so that tasks may differ not only in the reward function but also in the transition kernels, under the condition that both are representable as linear combinations of features.

Linear representations of the model were first introduced as linear MDPs~\citep{yang2019sample}, where the transition and the reward functions share a common feature space $\Phi : {\cal S}\times {\cal A} \rightarrow \mathbb{R}^d$ for every state and action pair. 
Because coupling both functions under the same feature space could be highly restrictive for representing diverse environments, we propose using independent feature representations for the reward and the transition kernel. 
Formally, the probability distribution over the next states for each state-action pair takes the following form

\begin{equation}
{\cal P}(s'|s,a) = \varphi(s,a,s')^\top\textbf{p} \quad \forall s,s'\in{\cal S}, a\in{\cal A},
\label{eq:linear_kernel}
\end{equation}

where $\varphi: {\cal S}\times{\cal A}\times{\cal S} \rightarrow \mathbb{R}^k$ is the feature function for every transition triplet, and $\textbf{p} \in \mathbb{R}^k$ denotes the linear weights that are specific to the task. 
Without loss of generality, we assume that $\mathbf{p}$ lies in the probability simplex. 
This assumption does not restrict the representational capacity of the model, as any arbitrary scaling of the weights can be seamlessly absorbed by an inverse scaling of the feature function $\varphi$.

This family of probability distributions is particularly useful for environments where transition dynamics systematically vary due to physical parameters, such as in robotic navigation across terrains with varying friction or drone flight under different wind conditions.
By letting $\varphi$ capture fundamental kinematics and $\textbf{p}$ represent specific environmental conditions, this approach enables rapid adaptation to new dynamics.

\begin{proposition}
Given a transition kernel matrix ${\cal P}$ and feature matrix $\varphi$, an exact solution for the weights that linearize the transition kernel is
$$\textbf{p} = \varphi^\dagger {\cal P}.$$
\label{prop:exact_p}
\end{proposition}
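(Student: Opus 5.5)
We write the linear kernel model of Eq.~\ref{eq:linear_kernel} in matrix form and solve the resulting linear system with the Moore--Penrose pseudoinverse. Stack the transition triplets $(s,a,s')$ as rows, indexed by the $N=|{\cal S}|^2|{\cal A}|$ triplets. Then ${\cal P}\in\mathbb{R}^{N}$ is the vectorized kernel and $\varphi\in\mathbb{R}^{N\times k}$ is the matrix whose row for $(s,a,s')$ is $\varphi(s,a,s')^\top$. In this notation, Eq.~\ref{eq:linear_kernel} for all triplets at once reads ${\cal P}=\varphi\,\textbf{p}$. The claim therefore amounts to saying that $\textbf{p}=\varphi^\dagger{\cal P}$ solves this system whenever a linearizing weight vector exists, which is the standing assumption of the linear model.

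\textbf{Consistency and the projector property.} Assume some $\textbf{p}^\star$ satisfies ${\cal P}=\varphi\textbf{p}^\star$, i.e.\ ${\cal P}$ lies in the column space of $\varphi$. Recall that $\varphi\varphi^\dagger$ is the orthogonal projector onto $\mathrm{range}(\varphi)$; this follows from the Penrose identities $\varphi\varphi^\dagger\varphi=\varphi$ and $(\varphi\varphi^\dagger)^\top=\varphi\varphi^\dagger$. Hence
\[
\varphi(\varphi^\dagger{\cal P})=\varphi\varphi^\dagger\varphi\textbf{p}^\star=\varphi\textbf{p}^\star={\cal P},
\]
so $\textbf{p}=\varphi^\dagger{\cal P}$ reproduces the kernel exactly for every $(s,a,s')$.

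\textbf{Uniqueness, inconsistency, and the simplex normalization.} If $\varphi$ has full column rank, then $\varphi^\dagger=(\varphi^\top\varphi)^{-1}\varphi^\top$ and the solution is unique. Otherwise $\varphi^\dagger{\cal P}$ is the minimum-norm exact solution. In the inconsistent case it is the least-squares minimizer of $\|\varphi\textbf{p}-{\cal P}\|$, so the formula remains the natural estimator. The main obstacle is reconciling this solution with the standing convention that $\textbf{p}$ lies in the simplex. The pseudoinverse solution need not be nonnegative or sum to one. I would handle this through the rescaling remark made before the proposition. The exact solution can be renormalized, and, when the null space of $\varphi$ is nontrivial, shifted along it, with the scaling absorbed into $\varphi$. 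This preserves the product $\varphi\textbf{p}$ and hence exactness. I would state this as a remark rather than claim that $\varphi^\dagger{\cal P}$ itself lies in the simplex.
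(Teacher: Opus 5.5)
Your argument is correct, but it runs in the opposite direction from the paper's. The paper takes $\varphi$ as a $k\times|{\cal S}|^2|{\cal A}|$ matrix and writes ${\cal P}=\varphi^\top\mathbf{p}$. It left-multiplies by $\varphi$ to get the normal equations $\varphi{\cal P}=\varphi\varphi^\top\mathbf{p}$. Assuming $\varphi\varphi^\top$ is invertible, it solves $\mathbf{p}=(\varphi\varphi^\top)^{-1}\varphi{\cal P}$ and identifies this with $\varphi^\dagger{\cal P}$. That is a necessity argument: any linearizing weight vector must equal this expression, so existence plus full rank gives exactness and uniqueness together.

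You instead verify sufficiency directly from the Penrose identity $\varphi\varphi^\dagger\varphi=\varphi$. This needs only consistency, ${\cal P}\in\mathrm{range}(\varphi)$, and no rank condition. Full column rank enters only for uniqueness, and there your $(\varphi^\top\varphi)^{-1}\varphi^\top$ is exactly the paper's formula in the transposed orientation. What your route buys:
\begin{itemize}
\item It covers redundant feature maps, where the pseudoinverse gives the minimum-norm exact solution and the paper's inverse does not exist.
\item It covers the misspecified case, where the pseudoinverse gives the least-squares solution.
\item With your $N\times k$ orientation, $\varphi^\dagger{\cal P}$ is literally well-typed. In the paper's orientation the matrix it derives is really $(\varphi^\top)^\dagger$.
\end{itemize}

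Two small remarks. Your exactness step uses only the first Penrose identity, so the orthogonal-projector observation is not needed. Your choice to keep the simplex normalization as a separate remark is sound, since the paper's proof ignores it. Note, though, that absorbing a rescaling into $\varphi$ is a global change shared by all tasks, so it cannot normalize each task's $\mathbf{p}$ separately.
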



Our main objective now is to decouple as much as possible the dynamics of an MDP
from the task and the policy, such that {\em universal successor features} $\psi\in\mathbb{R}^d$ can be 
expressed as a linear combination of what we name {\em robust successor features} 
$\tau\in\mathbb{R}^{k\times d}$ and the linear weights of the dynamics $\textbf{q}\in\mathbb{R}^k$ (playing the role for the dynamics that $\textbf{z}$ plays for the policy in Eq.~\ref{eq:usf_q_values}), 
that is, $\psi(s,a,\textbf{z}) = \tau(s,a,\textbf{z},\textbf{q})^\top\textbf{q}$. 
Thus, we start from the recursive definition of the Bellman equation for 
{\em universal successor features} in Eq.~\ref{eq:successor_features},
which is given by
\begin{align}
\psi(s,a,\textbf{z}) &= \sum_{s'} {\cal P}(s'|s,a) \left[ \phi(s,a,s') + \gamma \sum_{a'} \pi_{\textbf{z}}(a'|s') \psi(s',a',\textbf{z}) \right].
\label{eq:bellman_usf}
\end{align}

By substituting $\psi(s,a,\textbf{z})=\tau(s,a,\textbf{z},\textbf{q})^\top \textbf{q}$ and ${\cal P}(s'|s,a) = \varphi(s,a,s')^\top \textbf{q}$ we obtain
\begin{align*}
\tau(s,a,\textbf{z},\textbf{q})^\top \textbf{q} &= \sum_{s'} \left[ \phi(s,a,s') + \gamma \sum_{a'} \pi_{\textbf{z},\textbf{q}}(a'|s') \tau(s',a',\textbf{z},\textbf{q})^\top \textbf{q} \right] \varphi(s,a,s')^\top \textbf{q}.
\end{align*}
This yields a recursive equation for estimating $\tau(s,a,\textbf{z},\textbf{q})$.
This implies that if $\textbf{q}$ matches the exact solution from 
Prop.~\ref{prop:exact_p}, then 
$\tau(s,a,\textbf{z},\textbf{q})^\top\textbf{q} \equiv \psi(s,a,\textbf{z})$. 
However, it can also generalize zero-shot to any novel transition dynamics ${\cal P}'$ expressable as a linear combination of the features. 
Also, note that one possible solution to $\tau(s,a,\textbf{z},\textbf{q})$ is

\begin{align*}
\tau(s,a,\textbf{z},\textbf{q})^\top &= \sum_{s'} \left[ \phi(s,a,s') + \gamma \sum_{a'} \pi_{\textbf{z},\textbf{q}}(a'|s') \tau(s',a',\textbf{z},\textbf{q})^\top \textbf{q} \right] \varphi(s,a,s')^\top.
\end{align*}

{\em Robust successor features} extends the state-action value functions from Eq.~\ref{eq:usf_q_values} to

\begin{align}
\nonumber Q(s,a,\textbf{w},\textbf{p},\textbf{z},\textbf{q}) &= \psi(s,a,\textbf{z})^\top \textbf{w}\\
\nonumber &= \left(\tau(s,a,\textbf{z},\textbf{q})^\top \textbf{p}\right)^\top \textbf{w}\\
&= \textbf{p}^\top \tau(s,a,\textbf{z},\textbf{q}) \textbf{w},
\label{eq:linear_q_values}
\end{align}

where $Q(s,a,\textbf{w},\textbf{p},\textbf{z},\textbf{q}) \equiv Q(s,a,\textbf{w},\textbf{z})$ if both share the 
same transition kernel ${\cal P} = \varphi^\top \textbf{p} = \varphi^\top \textbf{q}$ (i.e., $\textbf{p}=\textbf{q}$); more generally $\textbf{p}\neq\textbf{q}$, so $\tau$ learned under reference dynamics $\textbf{q}$ can be queried against a different task's dynamics $\textbf{p}$.

Given a set of joint available rewards and dynamics ${\cal C}$, which might be induced by the set of input tasks, a deterministic policy for a
new task with the reward $\textbf{w}'$ and the transition dynamics $\textbf{p}'$ is 
\begin{align}
    \nonumber \pi(s) &\in \arg\max_{a\in{\cal A}}\max_{\textbf{z},\textbf{q}\in{\cal C}} Q(s,a,\textbf{w}',\textbf{p}',\textbf{z},\textbf{q})\\
    &= \arg\max_{a\in{\cal A}}\max_{\textbf{z},\textbf{q}\in{\cal C}} (\textbf{p}')^\top \tau(s,a,\textbf{z},\textbf{q}) \textbf{w}'.
    \label{eq:robust_gpi}
\end{align}

%% file: 4_training_usfa.tex
\section{Learning Robust Successor Features}


Generalization to environments that may differ in their reward signals and transition kernels, using the same predictive function $\tau$ is at the core of robust SFs. In this section, we show how to obtain $\tilde{\tau}$, an approximation of the optimally robust SFs, and analyze the approximation error of  $\tilde{Q}$ induced by $\tilde{\tau}$ when following the GPI policy (Eq.~\ref{eq:robust_gpi}).  




Let us first define
\begin{align*}
{\cal M}^{\phi,\varphi} := \Big\{\, M = \langle {\cal S},{\cal A},{\cal P},{\cal R},\gamma\rangle \;\Big|\; &\exists\, \mathbf{w}\in\mathbb{R}^d,\ \mathbf{p}\in\mathbb{R}^k \text{ with } \mathbf p \geq 0,\ \|\mathbf p\|_1 = 1, \\
&\text{s.t. } \forall s,s'\in{\cal S}, a\in{\cal A}: \\
&{\cal R}(s,a,s') = \phi(s,a,s')^\top \mathbf{w}, \ {\cal P}(s'|s,a) = \varphi(s,a,s')^\top \mathbf{p} \,\Big\},
\end{align*}
as the set of MDPs that respect the linear decomposition of the reward and the transition kernel, with dynamics weights $\mathbf p$ restricted to the probability simplex.
As before, this restriction is without loss of generality, since any rescaling of $\mathbf p$ can be absorbed into $\varphi$.
Let $\phi_{\max} := \max_{s,a,s'}\|\phi(s,a,s')\|$ bound the reward features and let $\kappa_{\mathbf{pq}} := \min(2, L_\varphi\|\mathbf p - \mathbf q\|)$ bound the $\ell_1$ distance $\|P_{\mathbf{w,p}}(\cdot|s,a) - P_{\mathbf{z,q}}(\cdot|s,a)\|_1$ between the transition kernels of $M_{\mathbf{w,p}}$ and $M_{\mathbf{z,q}}$ (Lem.~\ref{lem:kernel_l1_bound}, App.~\ref{sec:theoretical_results}).
Then, we reformulate the GPI theorem for successor features (\citet{barreto2017successor}, Thm. 2) for robust SFs, to prove that the same guarantees apply in this particular setting for ${\cal M}^{\phi,\varphi}$. 
In addition to the generalized bound by \citet{borsa2018universal}, we observe that the magnitude of the chosen policy encoding \textbf{z} scales the variation of the transition kernels $\kappa_{\mathbf{pq}}$ (proof in App.~\ref{sec:theoretical_results}).

\begin{theorem}
\label{thm:gpi_robust_sf}
    Let $Q^\pi_{{\bf w,p}}$ (resp., $Q^{\pi_{\bf z,q}}_{{\bf w,p}}$)
    be the action-value function of applying policy $\pi$ (resp., optimal policy
    $\pi_{\bf z,q}$) on task $M_{\bf w,p}\in \mathcal{M}^{\phi,\varphi}$.
    Given a set of tasks ${\cal C}$ with approximations
    $\tilde{Q}^{\pi_{\bf z,q}}_{\bf w,p}(s,a) =
    {\bf p}^\top \tilde{\tau}(s,a,{\bf z},{\bf q}) {\bf w}$
    for $({\bf z,q}) \in \mathcal{C}$,
    and the GPI policy
    $\pi(s) \in \arg\max_{a\in A}\max_{{\bf z,q}\in\mathcal{C}}
    {\bf p}^\top \tilde{\tau}(s,a,{\bf z},{\bf q}) {\bf w}$,
    let $\rho_{\cal C} := \max_{s',a',({\bf z,q})\in{\cal C}} \|\tau(s',a',{\bf z},{\bf q}) - \tilde\tau(s',a',{\bf z},{\bf q})\|$
    denote the worst-case approximation error of $\tilde\tau$ over ${\cal C}$.
    Then, for all states $s\in {\cal S}$ and actions $a \in {\cal A}$,
    \begin{align*}
      Q^*_{{\bf w,p}}(s,a) - Q^\pi_{{\bf w,p}}(s,a) \leq \frac{2}{1-\gamma}\Bigg(\phi_{\max}\min_{{\bf z,q}\in{\cal C}}\Big(\|{\bf w} - {\bf z}\| + \frac{\kappa_{\mathbf{pq}}}{1-\gamma}\|{\bf z}\|\Big) + \|{\bf w}\|\,\rho_{\cal C}\Bigg).
    \end{align*}
\end{theorem}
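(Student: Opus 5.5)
We follow the structure of the proof of Theorem~2 in \citet{barreto2017successor}. The argument has two layers. The first is the generic GPI lemma: if $|\tilde Q^{\pi_j}(s,a) - Q^{\pi_j}(s,a)|\le\epsilon$ for every policy in the library, then the GPI policy satisfies $Q^{\pi}\ge \max_j Q^{\pi_j} - \frac{2\epsilon}{1-\gamma}$. The second is a bound on how far the optimal value of the target task is from the value of each library policy on that task.

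Concretely, fix $({\bf z},{\bf q})\in{\cal C}$ and write $\pi_{\bf z,q}$ for the optimal policy of $M_{\bf z,q}$. We decompose
\begin{align*}
Q^*_{\bf w,p} - Q^{\pi}_{\bf w,p} \le \big(Q^*_{\bf w,p} - Q^{\pi_{\bf z,q}}_{\bf w,p}\big) + \big(Q^{\pi_{\bf z,q}}_{\bf w,p} - Q^{\pi}_{\bf w,p}\big).
\end{align*}
The second term is controlled by the GPI lemma applied to the approximations $\tilde Q^{\pi_{\bf z,q}}_{\bf w,p} = {\bf p}^\top\tilde\tau{\bf w}$. By Cauchy--Schwarz,
$|{\bf p}^\top(\tau-\tilde\tau){\bf w}|\le \|{\bf p}\|\,\|\tau-\tilde\tau\|\,\|{\bf w}\|\le \|{\bf w}\|\rho_{\cal C}$, where we use $\|{\bf p}\|\le\|{\bf p}\|_1=1$ on the simplex and read $\|\cdot\|$ on $\tau$ as the operator norm. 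This contributes $\frac{2}{1-\gamma}\|{\bf w}\|\rho_{\cal C}$. A subtlety must be checked here: the exact value ${\bf p}^\top\tau(s,a,{\bf z},{\bf q}){\bf w}$ has to equal $Q^{\pi_{\bf z,q}}_{\bf w,p}$. This holds if $\tau$ satisfies the recursive equation from Section~3 with the target dynamics ${\bf p}$ contracted in. I would state it as the interpretation of $\tau$ (the SFs of $\pi_{\bf z,q}$ under kernel $\varphi^\top{\bf p}$) and note it explicitly.

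For the first term, we insert $Q^*_{\bf z,q}$ and use the triangle inequality:
\begin{align*}
Q^*_{\bf w,p} - Q^{\pi_{\bf z,q}}_{\bf w,p} \le |Q^*_{\bf w,p} - Q^*_{\bf z,p}| + |Q^*_{\bf z,p} - Q^{\pi_{\bf z,q}}_{\bf z,p}| + |Q^{\pi_{\bf z,q}}_{\bf z,p} - Q^{\pi_{\bf z,q}}_{\bf w,p}|.
\end{align*}
The two reward-change terms share the kernel ${\bf p}$, so each is at most $\frac{\phi_{\max}\|{\bf w}-{\bf z}\|}{1-\gamma}$, since the per-step reward difference is $|\phi^\top({\bf w}-{\bf z})|\le\phi_{\max}\|{\bf w}-{\bf z}\|$. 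Together they give $\frac{2\phi_{\max}\|{\bf w}-{\bf z}\|}{1-\gamma}$.

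The middle term is the dynamics mismatch: $\pi_{\bf z,q}$ is optimal for kernel ${\bf q}$ but is evaluated under kernel ${\bf p}$, with the reward ${\bf z}$ held fixed. We bound it as
\begin{align*}
|Q^*_{\bf z,p}-Q^*_{\bf z,q}| + |Q^{\pi_{\bf z,q}}_{\bf z,q}-Q^{\pi_{\bf z,q}}_{\bf z,p}|,
\end{align*}
using that $Q^{\pi_{\bf z,q}}_{\bf z,q} = Q^*_{\bf z,q}$. Each piece is a simulation-lemma bound. Expanding the Bellman equations, the difference $\Delta$ satisfies $\|\Delta\|_\infty\le\gamma\|\Delta\|_\infty + \max_{s,a}\big|\sum_{s'}(P_{\bf p}-P_{\bf q})(s'|s,a)\,[r(s,a,s')+\gamma V(s')]\big|$.

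The main obstacle is getting the precise constant. A naive Hölder bound gives $\kappa_{\bf pq}\big(\phi_{\max}\|{\bf z}\| + \gamma\frac{\phi_{\max}\|{\bf z}\|}{1-\gamma}\big) = \kappa_{\bf pq}\frac{\phi_{\max}\|{\bf z}\|}{1-\gamma}$, using Lemma~\ref{lem:kernel_l1_bound} for the $\ell_1$ distance and $\|r+\gamma V\|_\infty\le\frac{\phi_{\max}\|{\bf z}\|}{1-\gamma}$. Hence each piece is at most $\frac{\kappa_{\bf pq}\phi_{\max}\|{\bf z}\|}{(1-\gamma)^2}$, and the two pieces sum to $\frac{2\kappa_{\bf pq}\phi_{\max}\|{\bf z}\|}{(1-\gamma)^2}$.

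Collecting terms, the bound is
\begin{align*}
\frac{2}{1-\gamma}\phi_{\max}\Big(\|{\bf w}-{\bf z}\| + \frac{\kappa_{\bf pq}}{1-\gamma}\|{\bf z}\|\Big) + \frac{2}{1-\gamma}\|{\bf w}\|\rho_{\cal C}.
\end{align*}
This matches the statement. Finally, we take the minimum over $({\bf z},{\bf q})\in{\cal C}$, which is allowed because the GPI term does not depend on the choice of $({\bf z},{\bf q})$. One could tighten the argument by centering $V$ (a span argument, since $P_{\bf p}-P_{\bf q}$ sums to zero), but the stated constant does not require it. The cap at $2$ in $\kappa_{\bf pq}$ comes directly from the trivial $\ell_1$ bound between two distributions.
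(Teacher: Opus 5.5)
Your proof is correct and reaches exactly the stated constant, but it handles the first term differently from the paper. The paper follows the template of Lemma~1 in \citet{barreto2017successor}. With $i\equiv(\mathbf{w},\mathbf{p})$ and $j\equiv(\mathbf{z},\mathbf{q})$, it splits $Q^{i}_{i}-Q^{j}_{i}=(Q^{i}_{i}-Q^{j}_{j})+(Q^{j}_{j}-Q^{j}_{i})$, where each difference changes reward and kernel at once. It bounds both pieces recursively with the mixed H\"older estimate of Prop.~\ref{prop:opt_value_gap}, which gives $\frac{2}{1-\gamma}\bigl(\delta_{ij}+\gamma\kappa_{\mathbf{pq}}\|Q^{j}_{j}\|_\infty\bigr)$ (Lem.~\ref{lem:opt_policy_gap}). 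It must then bound $\delta_{ij}$, a difference of expected rewards under two \emph{different} kernels, by $\phi_{\max}\|\mathbf{w}-\mathbf{z}\|+\kappa_{\mathbf{pq}}\phi_{\max}\|\mathbf{z}\|$ (Prop.~\ref{prop:max_reward_diff}). So the dynamics mismatch enters once through the immediate reward and once through the continuation, and the two combine as $\kappa_{\mathbf{pq}}\phi_{\max}\|\mathbf{z}\|\bigl(1+\frac{\gamma}{1-\gamma}\bigr)$.

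You instead pass through the hybrid task $M_{\mathbf{z},\mathbf{p}}$ and use $Q^{\pi_{\mathbf{z},\mathbf{q}}}_{\mathbf{z},\mathbf{q}}=Q^{*}_{\mathbf{z},\mathbf{q}}$, so every step perturbs a single axis. You get two reward changes under the common kernel $P_{\mathbf{p}}$, where no $\kappa_{\mathbf{pq}}$ appears, and two simulation-lemma steps under the common reward $\mathbf{z}$, where no $\mathbf{w}-\mathbf{z}$ appears. In the latter, immediate reward and continuation are absorbed together in $\|r+\gamma V\|_\infty\le\phi_{\max}\|\mathbf{z}\|/(1-\gamma)$.

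The paper's route reuses an existing proof template and keeps $\min(\cdot,\cdot)$ refinements inside its lemmas. Yours needs only textbook perturbation bounds and makes the source of the $(1-\gamma)^{-2}$ factor transparent. It is also easy to tune: routing through $M_{\mathbf{w},\mathbf{q}}$ instead puts $\|\mathbf{w}\|$ in place of $\|\mathbf{z}\|$, and your span-centering remark would tighten the dynamics terms.

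Your explicit caveat that $\mathbf{p}^\top\tau(s,a,\mathbf{z},\mathbf{q})\mathbf{w}$ must equal $Q^{\pi_{\mathbf{z},\mathbf{q}}}_{\mathbf{w},\mathbf{p}}$ is well placed. The paper's Prop.~\ref{prop:approx_error} uses this identification without comment, and in both proofs it is an interpretive assumption on $\tau$ rather than something derived.
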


Note that when $\mathbf{q}\rightarrow\mathbf{p}$ then $\kappa_{\mathbf{pq}} \rightarrow 0$ and this collapses to the reward-only bound.

Having established GPI as a valid protocol for approximating the optimal action-value function, the next step is to compute $\tilde{Q}$. 
We employ the $n$-step \emph{Temporal Difference} (TD) error \citep{sutton1998reinforcement} for learning $\tilde{\psi}$, which satisfies the Bellman equation \citep{borsa2018universal}. 
Specifically, we focus on the classical $1$-step TD-error. 
Expanding the terms using the robust SF formulation (Eq.~\ref{eq:linear_q_values}) yields:

\begin{align}
\label{eq:td_error}
\nonumber \delta^{\bf wp}_{\bf zq} &= r_{\bf w}(s,a,s') + \gamma \tilde{Q}(s', \pi_{\bf z,q}(s'), {\bf w}, {\bf p}, {\bf z}, {\bf q}) - \tilde{Q}(s, a, {\bf w}, {\bf p}, {\bf z}, {\bf q}) \\ 
\nonumber &= \left[ \phi(s,a,s') + \gamma \tilde{\psi}(s', \pi_{\bf z,q}(s'), {\bf z}) - \tilde{\psi}(s, a, {\bf z}) \right]^\top {\bf w} \\
 &= \left[ \phi(s,a,s')^\top + {\bf p}^\top \left[ \gamma \tilde{\tau}(s', \pi_{\bf z,q}(s'), {\bf z}, {\bf q}) - \tilde{\tau}(s, a, {\bf z}, {\bf q}) \right] \right]{\bf w}  = \delta^{\bf p}_{\bf zq}  {\bf w}.
\end{align}

The process of learning robust SFs assumes a set of training tasks ${\cal M}^{\phi,\varphi}$, with known model decompositions $(\textbf{w}, \textbf{p})$. 
During training, a task $M_{\textbf{w}, \textbf{p}}$ can be drawn uniformly at random from ${\cal M}^{\phi,\varphi}$.
To prevent ${\tilde{\tau}}$ from overfitting and to ensure bounded updates, it is trained following the GPI policy over a set of tasks ${\cal C}$.
These tasks are sampled from a multivariate Gaussian distribution ${\cal G}$ conditioned on $M_{\textbf{w},\textbf{p}}$. 
At each step, the agent executes an action $a$ selected via an $\epsilon$-greedy strategy over the GPI policy.
Subsequently, the policies for each task in ${\cal C}$ are updated, the corresponding TD-errors $\delta^{\textbf{p}}_{\textbf{z}_i,\textbf{q}_i}$ calculated, and these errors are used to update the parameters $\theta$ of $\tilde{\tau}$.


\begin{algorithm}
\caption{Learning Robust SFs with $\epsilon$-greedy Q-learning}
\label{alg:learn_robust_sf}
    \begin{algorithmic}[1]
    	\Require ${\cal M}^{\phi,\varphi}$ (training tasks), ${\cal G}$ (multivariate Gaussian distribution), $N$ (number of sampled tasks), $\epsilon$  (exploration parameter), $\alpha$ (learning rate)
    	\State select initial state $s\in{\cal S}$
    	\For{number of steps}
    		\State $M_{\textbf{w}, \textbf{p}} \sim {\cal U}({\cal M}^{\phi,\varphi})$
    		\State ${\cal C} \leftarrow \{(\textbf{z}_i, \textbf{q}_i) \sim {\cal G}(\cdot | \textbf{w}, \textbf{p}) \textbf{ } \forall i \in [1, N] \}$
    		\State \textbf{if} Bernoulli($\epsilon$)$=1$ \textbf{then} $a\sim {\cal U}({\cal A})$ \{Exploration\}
    		\State \textbf{else} $a \leftarrow \arg\max_b\max_i\textbf{p}^\top\tilde{\tau}(s,b,\textbf{z}_i,\textbf{q}_i)\textbf{w}$ \{GPI\}
    		\State Apply $a$ and observe transition $(s,a,s',\phi,\varphi)$ 
    		\For {$i\leftarrow 1, \ldots, N$} \{Update $\tilde{\tau}$\}
			\State $\pi_{{\bf z}_i,{\bf q}_i}\leftarrow \arg\max_b \textbf{q}_i^\top\tilde{\tau}(s',b,\textbf{z}_i,\textbf{q}_i)\textbf{z}_i$  
			\State $\theta \xleftarrow{\alpha} \delta^{\bf p}_{{\bf z}_i,{\bf q}_i} \nabla_\theta \tilde{\tau}$  \{Update $\theta$ with the TD-error in Eq.~\ref{eq:td_error} \}
    		\EndFor
    		\State $s \leftarrow s'$  \{Restart to an initial state if $s'$ is terminal\} 
    	\EndFor
    \State \textbf{return} $\tilde{\tau}$
    \end{algorithmic}
\end{algorithm}

Algorithm~\ref{alg:learn_robust_sf} can be further extended to incorporate deep RL \citep{barreto2018transfer}. 
In our case, we use Double DQN (DDQN) \citep{van2016deep}, an extension of DQN \citep{mnih2015human}, to learn $\tilde{Q}^{\textbf{z},\textbf{q}}_{\textbf{w},\textbf{p}}$ in an {\em off-policy} setting where observed transitions and TD-errors from different MDPs and policies are stored in a replay buffer to subsequently update $\tilde{\tau}$.
Our framework can also be easily adapted to learn $(\textbf{w}, \textbf{p})$ with any linear regression algorithm when the model decomposition is unknown. 
Learning the feature map $\phi$ is also common in previous work (e.g., \citet{barreto2018transfer}); however, this is beyond the scope of this article, which focuses on the generalization capabilities of robust SFs when compared to alternatives that focus solely on either the reward function or the transition kernel. 
Learning both $\phi$ and $\varphi$ from a given set of MDPs represents a promising research direction that would pave the way for addressing continuous RL tasks.

%% file: 5_experiments.tex
\section{Experiments}

This section analyzes the generalization capabilities of robust successor features. 
First, we outline the experimental setup in a classic stochastic grid-based domain (Section 5.1). 
Then, we evaluate environment-aware zero-shot generalization when true environment parameters are known (Section 5.2). 
Appendix \ref{sec:apdx-few-shot} provides an assessment of the few-shot adaptation in novel environments where the agent must estimate dynamics and rewards from limited interactions.

\subsection{Experimental Setup}

We demonstrate generalization over transition kernels using the classic Gridworld environment~\citep{russell2021artificial}. 
The environment is a $3\times 4$ grid where the agent starts in the bottom-left corner $(0,0)$. 
The grid contains a blocked location at $(1,1)$ and two terminal states in the upper-right region. 
Reaching a terminal state ends the episode; the upper terminal yields a reward of $+1$, and the lower yields $-1$. 
Additionally, every step incurs a constant negative reward (e.g., $r(s, a) = -0.02$). 
The agent can choose to move in any of the four cardinal directions (i.e., $\mathcal{A} = \{N, E, S, W\}$). 
Actions that would drive the agent off the grid simply cause it to remain in its current state.
The transition dynamics are stochastic: the agent moves in the intended direction with probability $p$, and is pushed to either of the two orthogonal directions with probability $p'=\frac{1-p}{2}$ (e.g., if $p=0.8$, then $p'=0.1$).

\begin{figure}[hbtp]
\centering
\begin{subfigure}{0.22\textwidth}
\resizebox{\textwidth}{!}{
\begin{tikzpicture}[scale=1]
  \foreach \x in {0,1,2,3,4}
    \draw (\x,0) -- (\x,3);
  \foreach \y in {0,1,2,3}
    \draw (0,\y) -- (4,\y);

  \fill[gray!50] (1.01,1.01) rectangle (1.99,1.99);

  \begin{scope}[shift={(0.5,0.5)}, scale=0.3]
    \draw[fill=blue!30] (-0.5,-0.5) rectangle (0.5,0.5);
    \fill[white] (-0.25,0.1) circle (0.1);
    \fill[white] (0.25,0.1) circle (0.1);
    \fill[black] (-0.25,0.1) circle (0.05);
    \fill[black] (0.25,0.1) circle (0.05);
    \draw[thick] (0,0.5) -- (0,0.8);
    \fill (0,0.8) circle (0.05);
  \end{scope}

  \fill[green!50] (3.01,2.01) rectangle (3.99,2.99);
  \node at (3.5,2.5) {\textbf{+1}};

  \fill[red!50] (3.01,1.01) rectangle (3.99,1.99);
  \node at (3.5,1.5) {\textbf{-1}};
\end{tikzpicture}
}
\caption{Environment layout}
\end{subfigure}
\hspace{0.05\textwidth}
\begin{subfigure}{0.25\textwidth}
\resizebox{\textwidth}{!}{
\begin{tikzpicture}[scale=1]

  \begin{scope}[shift={(0,0)}, scale=0.5]
    \draw[fill=blue!30] (-0.5,-0.5) rectangle (0.5,0.5);
    \fill[white] (-0.25,0.1) circle (0.1);
    \fill[white] (0.25,0.1) circle (0.1);
    \fill[black] (-0.25,0.1) circle (0.05);
    \fill[black] (0.25,0.1) circle (0.05);
    \draw[thick] (0,0.5) -- (0,0.8);
    \fill (0,0.8) circle (0.05);
  \end{scope}

  \draw[->, thick] (0,0.5) -- (0,1) node[above] {80\%};
  \draw[->, thick] (-0.3,0) -- (-1,0) node[left] {10\%};
  \draw[->, thick] (0.3,0.0) -- (1,0) node[right] {10\%};

\end{tikzpicture}
}
\caption{Stochastic transitions}
\end{subfigure}
\caption{The classic $3 \times 4$ Gridworld from \citet{russell2021artificial}.(a) The agent navigates toward terminal states while avoiding the blocked tile. (b) Actions are stochastic, with a probability of slipping orthogonally.}
\label{fig:aima_grid}
\end{figure}

In this environment, optimal behavior varies significantly based on the magnitude of the step penalty and the stochasticity of the transition dynamics.
For instance, high immediate step costs (e.g., $r(s, a) < -0.2$) drive the agent toward the nearest terminal state, regardless of whether the terminal reward is positive or negative.
If actions are highly deterministic, the agent selects the shortest path even if it borders the negative terminal state.
However, under highly stochastic conditions, a risk-averse policy emerges where the agent takes longer, safer routes.
Consequently, this setting effectively assesses the generalization capabilities of successor transition features when trained in specific configurations and evaluated on unseen ones.

We evaluate environment-aware zero-shot generalization and few-shot adaptation for four methods: our full model \textbf{RSF}, the \textbf{USFA} baseline~\citep{borsa2018universal}, and two ablations of RSF that restrict $\tau$ to generalizing over a single axis, \textbf{RSF-reward-only} and \textbf{RSF-prob-only}.
RSF and USFA are both trained on four environments spanning the Cartesian product of $p \in \{0.65, 0.95\}$ and $r \in \{-0.02, -0.2\}$, so that both methods observe identical training data.
RSF-reward-only is trained on two environments with a fixed step probability $p=0.8$ and $r \in \{-0.02, -0.2\}$; RSF-prob-only is trained on two environments with a fixed step reward $r=-0.1$ and $p \in \{0.65, 0.95\}$.
Each RSF variant (RSF, RSF-reward-only, RSF-prob-only) is trained with Double DQN (DDQN)~\citep{van2016deep} following Algorithm~\ref{alg:learn_robust_sf}; USFA follows the same training protocol, substituting its own $\psi$-based architecture for $\tau$.
For the first $500$ steps, the agent moves randomly to populate the replay buffer~\citep{lin1992self, mnih2015human}.
For the subsequent 1,500 steps, the agent strictly learns the vectors $\mathbf{p}$ and $\mathbf{w}$ (or, for USFA, $\mathbf{w}$ alone) via Mean Squared Error (MSE) using the Adam optimizer~\citep{kingma2015adam}, ensuring representation stability.
From step 2,000, training proceeds with Huber loss \citep{huber1992robust} and the Adam optimizer until 20,000 interactions.

\subsection{Environment-aware zero-shot generalization}

We evaluate the zero-shot transfer capabilities by measuring the Mean Absolute Error (MAE) of the predicted Q-values and the policy accuracy with respect to the optimal policy. 

To conduct this evaluation, the models for RSF, USFA, RSF-reward-only, and RSF-prob-only are first trained on their respective training environments (detailed above).
Following training, they are evaluated on a common, comprehensive grid of unseen configurations, $\mathcal{E} = \{0.65, 0.66, \dots, 0.95\} \times \{-0.20, -0.19, \dots, -0.01\}$, spanning all combinations of step probabilities $p$ and step rewards $r$. 
To purely isolate and evaluate the zero-shot generalization capabilities, the agent is directly provided with the true reward weights $\mathbf{w}$ and transition dynamics parameters $\mathbf{p}$ during testing.

Figure~\ref{fig:aima-experiments-mae} displays the mean absolute error (MAE) in Q-values predicted by the models compared to an optimal policy, averaged over five independent runs.
Formally, for a given configuration $c = (p, r) \in \mathcal{E}$, this error is computed over the state-action space as $\mathbb{E}_{s,a}[\lvert Q^*_c(s,a) - \hat{Q}_c(s,a)\rvert]$.

\begin{figure}[htbp]
    \centering
    \begin{subfigure}[c]{\textwidth}
        \includegraphics[width=\textwidth]{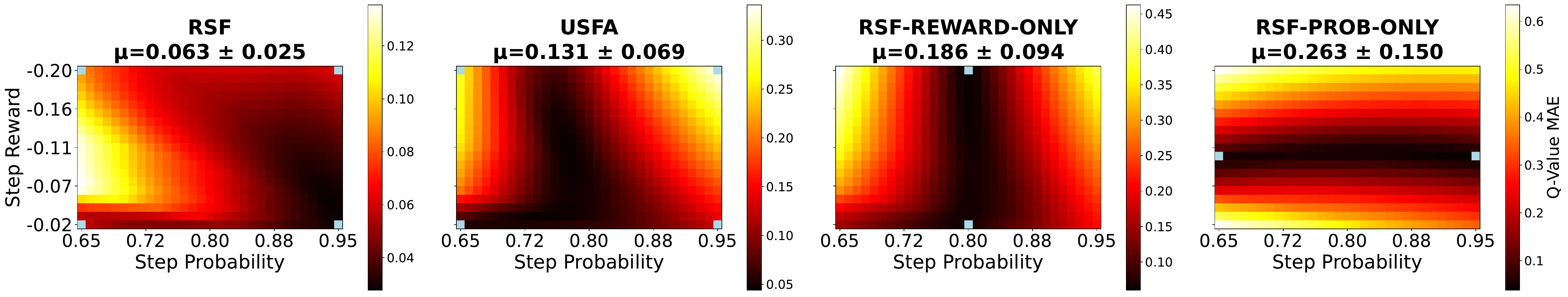}
    \end{subfigure}

    \caption{
    	Mean errors of the Q-function for the stochastic 2D grid experiments.
    	Note the different scales and that darker regions indicate lower error (better performance).
    	Training environments are marked by the light blue rectangles.
    	The mean errors for RSF, USFA, RSF-reward-only, and RSF-prob-only are $0.063$, $0.131$, $0.186$, and $0.263$, respectively.}
    \label{fig:aima-experiments-mae}
\end{figure}

\paragraph{RSF: Full Model}
Our full model is trained on four environments derived from the Cartesian product of $p \in \{0.65, 0.95\}$ and $r \in \{-0.02, -0.2\}$, exposing $\tau$ to variation in both reward and dynamics.
RSF achieves the lowest MAE across the evaluation grid ($\mu=0.063\pm0.025$), degrading only mildly toward the left side (low $p$), where reward and dynamics generalization must be traded off simultaneously.

\paragraph{USFA: Baseline}
The USFA baseline~\citep{borsa2018universal} is trained on the same four environments as RSF, but, lacking an explicit representation of the transition kernel, cannot adapt its successor features to the evaluated dynamics.
Despite training on four environments spanning $p \in \{0.65, 0.95\}$, USFA's error pattern shows a vertical band of low error centered slightly below $p=0.8$ -- close to the average step probability across its training set -- closely resembling a model trained only at a fixed $p=0.8$, as RSF-reward-only is (see below).
This yields a substantially higher error ($\mu=0.131\pm0.069$) than RSF despite identical training data, indicating that decoupling reward and dynamics into robust successor features, rather than the training environments alone, drives RSF's improvement.

\paragraph{RSF-reward-only: Reward-only Ablation}
This ablation is trained on two environments with a fixed step probability $p=0.8$ but differing step rewards of $-0.02$ and $-0.2$, matching the environment coverage of the default USFA setting in \citet{borsa2018universal}.
We observe a vertical band of low error around $p \in [0.78, 0.82]$ across all step rewards; deviating from this interval increases the error substantially ($\mu=0.186\pm0.094$), since the model never observes dynamics variation during training.

\paragraph{RSF-prob-only: Dynamics-only Ablation}
This ablation is trained on two environments with a fixed step reward of $-0.1$ but different step probabilities ($p=0.65$ and $p=0.95$).
We observe low error along the horizontal axis near the training reward of $-0.1$ across all step probabilities; as the step reward deviates from $-0.1$, error increases sharply ($\mu=0.263\pm0.150$, the highest among the four methods), reflecting the model's inability to generalize across reward variation.

\begin{figure}[htbp]
    \centering
	\includegraphics[width=\textwidth]{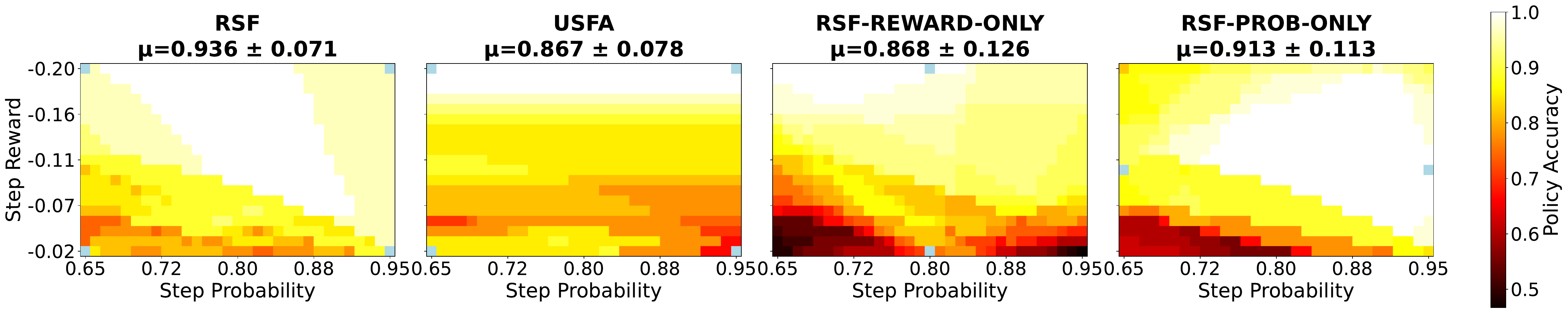}
    \caption{
    	Policy accuracy for the stochastic 2D grid experiments.
    	Brighter regions indicate higher accuracy (better performance).
    	Training environments are marked by the light blue rectangles.
    	The mean accuracies for RSF, USFA, RSF-reward-only, and RSF-prob-only are $93.6\%$, $86.7\%$, $86.8\%$, and $91.3\%$, respectively.}
    \label{fig:aima-experiments-policy-accuracy}
\end{figure}

\paragraph{Policy Accuracy}
Evaluating the resulting policy accuracy (Figure~\ref{fig:aima-experiments-policy-accuracy}) reinforces and nuances these insights. 
Formally, this metric is defined as $\frac{1}{\lvert \mathcal{S} \rvert} \sum_{s \in \mathcal{S}} \mathbb{I}\{\pi^*(s)=\hat{\pi}(s)\}$.
RSF again achieves the highest accuracy ($\mu=93.6\%\pm7.1\%$), followed by RSF-prob-only ($\mu=91.3\%\pm11.3\%$), RSF-reward-only ($\mu=86.8\%\pm12.6\%$), and USFA ($\mu=86.7\%\pm7.8\%$).
Notably, this ranking does not mirror the MAE results: RSF-prob-only has the worst Q-value error of the four methods but the second-best policy accuracy, since inaccurate Q-value magnitudes do not necessarily change the $\arg\max$ action, whereas RSF-reward-only, which never observes dynamics variation during training, yields worse policies despite a lower MAE than RSF-prob-only.
This indicates that explicitly modeling both transition and reward variation, as RSF does, is necessary for consistently accurate zero-shot policies, not merely accurate value estimates.


%% file: 7_discussion.tex
\section{Discussion}

\paragraph{Model-free.}
In this work, we introduced a generalization of the {\em successor features} \citep{barreto2017successor, borsa2018universal} that decouples the quality function not just from the reward but also from the system dynamics. 
We assumed that Markov decision processes (MDPs) can be modeled linearly \citep{yang2019sample}, utilizing potentially distinct feature spaces for rewards and dynamics. 
Since finite MDPs are trivially linearizable with $\phi, \varphi, {\bf w}, {\bf p} \in \mathbb{R}^{|{\cal S}|^2|{\cal A}|}$, a primary question is whether these components can be represented succinctly and learned jointly.
Previous work addressed this question for multi-task learning \citep{caruana1997multitask,baxter2000model} and successor features \citep{barreto2017successor}. 
In our case, the representation learned is for the {\em robust successor features} that predict the outcome for any given task ${\bf w}$ and ${\bf p}$; our few-shot experiments (Appendix~\ref{sec:apdx-few-shot}) already show that these weights can be estimated from limited interactions, so the open extension is to jointly learn the underlying feature maps $\phi$ and $\varphi$ themselves.

\paragraph{Exploration.}
Although Algorithm~\ref{alg:learn_robust_sf} trains the {\em robust successor features} using a GPI policy, it may still suffer from poor exploration due to its reliance on an $\epsilon$-greedy mechanism. In this regard, entropy-regularized RL \citep{neu2017unified} has been used in the optimization objectives to enhance the exploration and exploit it for zero-shot RL \citep{zisselman2023explore}. One potential model-based regularizer is the {\em Maximum Occupancy Principle} \citep{ramirez2024complex}, where the entropy is defined over both states and actions.

\paragraph{Representation collapse.}
Another consideration is that the loss function is built on a temporal difference (TD) learning method \citep{sutton1988learning,dayan1992convergence}. A limitation of this loss for representation learning is that it may lead to representation collapse, which is something not observed in this work since the representation is given. However, a natural step taken in previous work is to jointly learn the representation, the linear weights and the successor features. Thus, alternative methods (e.g., \citet{chua2024learning}) propose a contrastive loss to mitigate this effect, while improving the efficiency of learning successor features. 

\paragraph{Expressivity.}
Finally, we assume that MDPs have linear reward functions and transition kernels, which may not hold in highly complex environments with continuous state or action spaces, such as robotic control. 
In the context of transfer learning across similar but non-identical dynamical systems, this limitation is often addressed using the Hidden-Parameter MDP framework \citep{doshi2016hidden,killian2017robust}. 
Integrating this framework with robust successor features represents a compelling direction for future research.

%% file: acknowledgements.tex
\section*{Acknowledgements}

This work has been co-funded by MICIU/AEI/UE-PID2023-147145NB-I00, and MCIN/AEI/10.13039/501100011033 under the Maria de Maeztu Units of Excellence Programme (CEX2021-001195-M). Javier Segovia-Aguas is supported by the Ram{\'o}n y Cajal program, RYC2024-050163-I, funded by MICIU/AEI/10.13039/501100011033 and FSE+.

%% file: app_C_generalization.tex
\section{Theoretical Results}
\label{sec:theoretical_results}

We first establish a generalized value-gap bound accounting for differing transition kernels (Prop.~\ref{prop:opt_value_gap}, Lem.~\ref{lem:opt_policy_gap}); we then bound reward differences via $\phi_{\max}$ (Props.~\ref{prop:rew_up_bound}, \ref{prop:max_reward_diff}, Lem.~\ref{lem:max_q_value}); combining these proves Thm.~\ref{thm:gpi_robust_sf}.

\begin{remark}[Notation]
\label{rem:notation}
Throughout, $M_i, M_j \in {\cal M}^{\phi,\varphi}$ denote two arbitrary tasks, with dynamics weights $\mathbf p, \mathbf q$ lying in the probability simplex, and with $Q^i_i \equiv Q^{\pi_i}_i$ the value of $M_i$ under its own optimal policy $\pi_i$, and $Q^j_i \equiv Q^{\pi_j}_i$ the value of policy $\pi_j$ evaluated on $M_i$.
Results up to and including Lem.~\ref{lem:max_q_value} hold for any such pair.
When specializing to Thm.~\ref{thm:gpi_robust_sf} (Prop.~\ref{prop:approx_error} onward), we identify $i$ with the query task $(\mathbf{w,p})$ and $j$ with a reference task $(\mathbf{z,q}) \in {\cal C}$, matching the main text.
\end{remark}

\begin{definition}[Expected immediate features]
\label{def:exp_immediate_features}
\begin{align*}
\phi(s,a) &= \mathbb{E}_{S'\sim {\cal P}(\cdot|s,a)}\left[ \phi(s,a,S') \right] \\
		  &= \sum_{s'} {\cal P}(s'|s,a) \phi(s,a,s').
\end{align*}
When ${\cal P} = P_i$ is the transition kernel of a specific task $M_i$, we write $\phi_i(s,a)$ for the resulting expectation, to make the dependence on the task's dynamics explicit.
\end{definition}

\begin{definition}[Expected immediate reward]
\begin{align*}
r(s,a) &= \mathbb{E}_{S'\sim {\cal P}(\cdot|s,a)}\left[ r(s,a,S') \right] \\
       &= \sum_{s'} {\cal P}(s'|s,a) r(s,a,s') \\
       &= \sum_{s'} {\cal P}(s'|s,a) \phi(s,a,s')^\top {\bf w}\\
       &= \phi(s,a)^\top {\bf w}.
\end{align*}
\end{definition}

\begin{definition}[Maximum reward features]
\label{def:max_reward_features}
Let
$$\phi_{\max} := \max_{s,a,s'} \|\phi(s,a,s')\|.$$
This differs from \citet{barreto2017successor}, who define this quantity as $\max_{s,a}\|\phi(s,a)\|$ for $\phi(s,a)$ the expectation of Def.~\ref{def:exp_immediate_features} under a single, fixed transition kernel.
Since we consider a family of transition kernels, the expectation $\phi_i(s,a)$ now varies by task, so we instead take the maximum over the raw, unweighted features.
Lem.~\ref{lem:phimax_bound} shows that $\phi_{\max}$ still uniformly bounds every task's expected features $\phi_i(s,a)$, so it plays the same role as the original quantity while also being usable directly on raw features where needed (Prop.~\ref{prop:max_reward_diff}).
\end{definition}

\begin{lemma}
\label{lem:phimax_bound}
For any task $M_i \in {\cal M}^{\phi,\varphi}$ with transition kernel $P_i$, and any $s\in{\cal S}, a\in{\cal A}$, $$\|\phi_i(s,a)\| \leq \phi_{\max}.$$
\end{lemma}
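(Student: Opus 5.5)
The argument rests on one fact: $\phi_i(s,a)$ is a convex combination of the raw feature vectors $\phi(s,a,s')$, and a norm is convex. We start from Def.~\ref{def:exp_immediate_features}, $\phi_i(s,a) = \sum_{s'} P_i(s'|s,a)\,\phi(s,a,s')$. Because $M_i \in {\cal M}^{\phi,\varphi}$, $P_i(\cdot|s,a) = \varphi(s,a,\cdot)^\top \mathbf p$ is a genuine transition kernel. Hence $P_i(s'|s,a)\geq 0$ for every $s'$ and $\sum_{s'} P_i(s'|s,a) = 1$.

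Applying the triangle inequality and absolute homogeneity of the $\ell_2$-norm gives
\begin{align*}
\|\phi_i(s,a)\| \leq \sum_{s'} \bigl|P_i(s'|s,a)\bigr|\,\|\phi(s,a,s')\| = \sum_{s'} P_i(s'|s,a)\,\|\phi(s,a,s')\|.
\end{align*}
The equality uses nonnegativity of $P_i(s'|s,a)$. Each term satisfies $\|\phi(s,a,s')\|\leq \phi_{\max}$ by Def.~\ref{def:max_reward_features}. Summing the weights to one then yields $\|\phi_i(s,a)\|\leq \phi_{\max}$.

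The only delicate point is justifying that the weights $\varphi(s,a,s')^\top\mathbf p$ are nonnegative and sum to one. Membership of $\mathbf p$ in the simplex alone does not guarantee this, since $\varphi$ could have negative entries. We do not need that route, however. The definition of ${\cal M}^{\phi,\varphi}$ requires $M_i$ to be an MDP whose transition kernel ${\cal P}$ maps into $\Delta({\cal S})$, and its linear form is just a representation of that kernel. We will state this explicitly and conclude.
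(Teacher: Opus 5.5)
Your proposal is correct and follows the same route as the paper: triangle inequality, bound each $\|\phi(s,a,s')\|$ by $\phi_{\max}$, and use that the kernel weights are nonnegative and sum to one. Your explicit remark that nonnegativity comes from $M_i$ being an MDP, so that $P_i(\cdot|s,a)\in\Delta({\cal S})$, rather than from $\mathbf p$ lying in the simplex, is a point the paper relies on only implicitly.
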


{\em Proof.}
\begin{align*}
    \|\phi_i(s,a)\| &= \Big\|\sum_{s'} P_i(s'|s,a)\phi(s,a,s')\Big\| \\
    &\leq \sum_{s'} P_i(s'|s,a)\|\phi(s,a,s')\| &\text{(Triangle ineq.)}\\
    &\leq \phi_{\max}\sum_{s'} P_i(s'|s,a) & \text{(Def.~\ref{def:max_reward_features})}\\
    &= \phi_{\max}.
\end{align*}
$\hfill\square$

\subsection*{Proof of Prop.~\ref{prop:exact_p}}

{\em Proof}.
Transition kernel ${\cal P}$ has dimensionality $|{\cal SAS}|\times 1$ where every entry is mapped to a scalar that denotes the transition probability, and $\varphi$ has $k\times|{\cal SAS}|$ which maps every entry to a column vector of $k$ features.
Entries to both matrices are triplets $(s,a,s')$ where $a$ is the action applied in $s$ that leads to $s'$.
Then, the aim is to learn a vector $\textbf{p}$ of $k$ weights that linearize the transition kernel, so it is straightforward to see that

\begin{align*}
	{\cal P} &= \varphi^\top \textbf{p}\\
	\varphi {\cal P} &= \varphi \varphi^\top \textbf{p} \\
	(\varphi \varphi^\top)^{-1}\varphi {\cal P} &= \textbf{p}
\end{align*} 

assuming that $(\varphi\varphi^\top)$ is invertible and reordering the terms concludes the proof. $\hfill\square$

\begin{proposition}
    \label{prop:p_norm}
    Given {\bf p} $\in\mathbb{R}^k$ such that $\|{\bf p}\|_1 = 1$, then 
    $\|{\bf p}\| \leq 1$.
\end{proposition}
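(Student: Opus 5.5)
We prove the chain $\|\mathbf p\| \leq \|\mathbf p\|_\infty^{1/2}\|\mathbf p\|_1^{1/2} \leq \|\mathbf p\|_1 = 1$. The proof only uses $\|\mathbf p\|_1 = 1$, not nonnegativity of the entries, so it covers the statement as given.

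The first step is a Hölder-type bound. For each coordinate, $p_i^2 = |p_i|\cdot|p_i| \leq \|\mathbf p\|_\infty |p_i|$. Summing over $i$ gives
\begin{align*}
\|\mathbf p\|^2 = \sum_{i=1}^k p_i^2 \leq \|\mathbf p\|_\infty \sum_{i=1}^k |p_i| = \|\mathbf p\|_\infty \|\mathbf p\|_1 .
\end{align*}

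The second step bounds the $l_\infty$-norm. We have $\|\mathbf p\|_\infty = \max_i |p_i| \leq \sum_i |p_i| = \|\mathbf p\|_1 = 1$, since every summand is nonnegative. Substituting this into the first inequality gives $\|\mathbf p\|^2 \leq 1$. Taking square roots yields $\|\mathbf p\| \leq 1$.

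An equivalent route is to expand $\|\mathbf p\|_1^2 = \big(\sum_i |p_i|\big)^2 = \sum_i p_i^2 + \sum_{i\neq j}|p_i||p_j| \geq \|\mathbf p\|^2$, which gives $\|\mathbf p\| \leq \|\mathbf p\|_1 = 1$ directly. Either argument is a one-line finite-dimensional estimate. Its role in the paper is to make factors such as $L_\varphi\|\mathbf p-\mathbf q\|$ in $\kappa_{\mathbf{pq}}$, and norms of $\mathbf p$ in the later bounds, uniformly controlled for weights in the simplex.
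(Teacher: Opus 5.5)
Your proof is correct and matches the paper's argument: the paper also uses $|p_i| \le 1$ to get $p_i^2 \le |p_i|$, sums to obtain $\|\mathbf p\|^2 \le \|\mathbf p\|_1 = 1$, and takes square roots, which is your first route with $\|\mathbf p\|_\infty \le 1$ as the intermediate step; as you observe, nonnegativity of the entries is never needed. One small correction to your closing remark: the paper invokes this proposition only in Prop.~\ref{prop:approx_error}, to drop the factor $\|\mathbf p\|$ after Cauchy--Schwarz, and not to control $\kappa_{\mathbf{pq}}$, which is already capped at $2$ by definition.
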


{\em Proof.}
By definition, if $\|{\bf p}\|_1 = 1$, then for every element ${\bf p}_i$ in the vector $ {\bf p}\in \mathbb{R}^k$ we have that $|{\bf p}_i| \leq 1$ and non-negative.
Then, properties like $|{\bf p}_i|^2 \leq |{\bf p}_i|$ for any real number $|{\bf p}_i| \in [0, 1]$, and  $|{\bf p}_i|^2 = {\bf p}^2_i$, are fundamental, resulting in the inequality ${\bf p}_i^2 \leq |{\bf p}_i|$.
Summing both sides of the inequality over all components, we obtain $\sum_i {\bf p}_i^2 \leq \sum_i |{\bf p}_i|$. Note that the right-hand side is the definition of the $l_1$-norm which is $1$ for ${\bf p}$, so $\sum_i {\bf p}_i^2 \leq 1$.
The proof concludes by applying the square root to both sides, that is, $\|{\bf p}\| = \sqrt{\sum_i {\bf p}^2_i} \leq \sqrt{1} = 1$. $\hfill\square$

\begin{definition}[Transition-kernel sensitivity]
\label{def:Lphi}
For the class ${\cal M}^{\phi,\varphi}$, let $$L_\varphi := \max_{s,a} \sum_{s'} \|\varphi(s,a,s')\|,$$ and, for dynamics weights ${\bf p},{\bf q}$, define $$\kappa_{\bf pq} := \min\bigl(2,\; L_\varphi\|{\bf p}-{\bf q}\|\bigr).$$
\end{definition}

\begin{lemma}[Transition-kernel $\ell_1$ bound]
\label{lem:kernel_l1_bound}
For $M_i, M_j \in {\cal M}^{\phi,\varphi}$ with dynamics weights $\mathbf{p},\mathbf{q}$, and any $s \in {\cal S}, a \in {\cal A}$, $$\|P_{\mathbf{p}}(\cdot|s,a) - P_{\mathbf{q}}(\cdot|s,a)\|_1 \leq \kappa_{\mathbf{pq}}.$$
\end{lemma}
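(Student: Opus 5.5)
The bound splits into two separate inequalities, one for each argument of the minimum in $\kappa_{\mathbf{pq}}$. The trivial part, $\|P_{\mathbf p}(\cdot|s,a) - P_{\mathbf q}(\cdot|s,a)\|_1 \le 2$, needs nothing about the linear structure. Since $M_i, M_j \in {\cal M}^{\phi,\varphi}$, both $P_{\mathbf p}(\cdot|s,a)$ and $P_{\mathbf q}(\cdot|s,a)$ are probability distributions over ${\cal S}$. The triangle inequality then gives $\sum_{s'} |P_{\mathbf p}(s'|s,a) - P_{\mathbf q}(s'|s,a)| \le \sum_{s'} P_{\mathbf p}(s'|s,a) + \sum_{s'} P_{\mathbf q}(s'|s,a) = 2$.

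For the feature-dependent part, I will use the linear form of the kernel from Eq.~\ref{eq:linear_kernel}. Writing $P_{\mathbf p}(s'|s,a) - P_{\mathbf q}(s'|s,a) = \varphi(s,a,s')^\top(\mathbf p - \mathbf q)$, the Cauchy--Schwarz inequality gives for each $s'$
\begin{align*}
|P_{\mathbf p}(s'|s,a) - P_{\mathbf q}(s'|s,a)| \le \|\varphi(s,a,s')\|\,\|\mathbf p - \mathbf q\|.
\end{align*}
Summing over $s'$ gives $\|P_{\mathbf p}(\cdot|s,a) - P_{\mathbf q}(\cdot|s,a)\|_1 \le \big(\sum_{s'}\|\varphi(s,a,s')\|\big)\|\mathbf p - \mathbf q\|$. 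Bounding the parenthesised sum by its maximum over $(s,a)$, which is $L_\varphi$ from Def.~\ref{def:Lphi}, yields $L_\varphi\|\mathbf p - \mathbf q\|$.

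Since the $\ell_1$ distance is bounded by both $2$ and $L_\varphi\|\mathbf p-\mathbf q\|$, it is bounded by their minimum, which is $\kappa_{\mathbf{pq}}$. There is no real obstacle in this argument. The only point needing care is that the bound by $2$ relies on membership in ${\cal M}^{\phi,\varphi}$, which is what guarantees that $\varphi^\top\mathbf p$ is a genuine probability kernel. The simplex restriction on $\mathbf p$ is not needed for this lemma itself.
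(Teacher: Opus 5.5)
Your proof is correct and follows the paper's argument exactly: you bound the $\ell_1$ distance by $2$ using the triangle inequality on two probability distributions, then by $L_\varphi\|\mathbf p-\mathbf q\|$ using Cauchy--Schwarz on $\varphi(s,a,s')^\top(\mathbf p-\mathbf q)$ summed over $s'$, and finally take the minimum of the two bounds. Your remark that this lemma does not need the simplex restriction on $\mathbf p$ is also accurate.
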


{\em Proof.}
On the one hand $\|P_{\mathbf{p}}(\cdot|s,a) - P_{\mathbf{q}}(\cdot|s,a)\|_1 \leq \|P_{\mathbf{p}}(\cdot|s,a)\|_1 + \|P_{\mathbf{q}}(\cdot|s,a)\|_1 = 2$.
On the other,
\begin{align*}
    \|P_{\mathbf{p}}(\cdot|s,a) - P_{\mathbf{q}}(\cdot|s,a)\|_1
    &= \sum_{s'}\left|\varphi(s,a,s')^\top \mathbf{p} - \varphi(s,a,s')^\top \mathbf{q}\right| \\
    &= \sum_{s'} \bigl|\varphi(s,a,s')^\top({\bf p} - {\bf q})\bigr| \\
    &\leq \sum_{s'} \|\varphi(s,a,s')\| \, \|{\bf p} - {\bf q}\| \tag{Cauchy--Schwarz ineq.} \\
    &\leq L_\varphi \|{\bf p} - {\bf q}\|. \tag{Def.~\ref{def:Lphi}}
\end{align*}
Taking the minimum of the two upper bounds concludes this proof.$\hfill\square$

\begin{proposition}
    \label{prop:opt_value_gap}
    Let $p_i$ and $p_j$ be two probability distributions over the state space, and let $V_i$ and $V_j$ be bounded functions over states.
    Then
    \begin{equation}
        |\langle p_i, V_i \rangle - \langle p_j, V_j \rangle|
        \leq \|V_i - V_j\|_\infty + \|p_i - p_j\|_1 \min(\|V_i\|_\infty,\|V_j\|_\infty).
        \label{eq:l1_value_gap}
    \end{equation}
\end{proposition}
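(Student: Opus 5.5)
The plan is to split the difference with an intermediate cross term and then bound each piece using only Hölder's inequality. There are two natural ways to do this, and each produces one of the two terms inside the minimum. Since both bounds hold, their minimum holds as well.

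For the first split, add and subtract $\langle p_i, V_j\rangle$:
\begin{align*}
\langle p_i, V_i\rangle - \langle p_j, V_j\rangle
= \langle p_i, V_i - V_j\rangle + \langle p_i - p_j, V_j\rangle .
\end{align*}
We bound the two summands separately.
\begin{itemize}
\item Since $p_i$ is a probability distribution, $\|p_i\|_1 = 1$. Hölder's inequality with the pair $(\ell_1,\ell_\infty)$ then gives $|\langle p_i, V_i - V_j\rangle| \le \|V_i - V_j\|_\infty$.
\item Hölder's inequality also gives $|\langle p_i - p_j, V_j\rangle| \le \|p_i - p_j\|_1 \|V_j\|_\infty$.
\end{itemize}
The triangle inequality yields the bound with $\|V_j\|_\infty$ as the last factor.

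For the symmetric split, add and subtract $\langle p_j, V_i\rangle$ instead:
\begin{align*}
\langle p_i, V_i\rangle - \langle p_j, V_j\rangle
= \langle p_j, V_i - V_j\rangle + \langle p_i - p_j, V_i\rangle .
\end{align*}
The same two Hölder estimates apply, now using $\|p_j\|_1 = 1$, and give the bound with $\|V_i\|_\infty$ as the last factor.

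Both bounds share the term $\|V_i - V_j\|_\infty$. Taking the smaller of the two therefore gives exactly equation~\ref{eq:l1_value_gap}.

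No step here is a real obstacle. The only points to be careful about are the following.
\begin{itemize}
\item The first summand relies on $p_i$ and $p_j$ being genuine probability vectors, nonnegative with unit $\ell_1$ norm. This is assumed in the statement, and for the induced kernels it is ensured by membership in ${\cal M}^{\phi,\varphi}$.
\item Boundedness of $V_i$ and $V_j$ is what makes the sup-norms finite.
\end{itemize}

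A sharper bound is available, with $\|p_i-p_j\|_1$ multiplied by half the span of $V$, by exploiting $\sum_{s}(p_i-p_j)(s)=0$. However, it is not needed for the stated inequality, so I would not pursue it.
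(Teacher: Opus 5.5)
Your proof is correct and follows the paper's argument essentially verbatim. The paper also adds and subtracts the cross term $p_i(s)V_j(s)$, bounds the two pieces with H\"older's inequality, then exchanges the roles of $V_i$ and $V_j$ to get the $\|V_i\|_\infty$ version before taking the minimum.
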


{\em Proof.}
\begin{align*}
    \left|\langle p_i, V_i \rangle - \langle p_j, V_j \rangle\right|
    &= \left|\sum_s p_i(s)V_i(s) - p_j(s)V_j(s)\right| \\
    &= \left|\sum_s p_i(s)V_i(s) - p_i(s)V_j(s) + p_i(s)V_j(s) - p_j(s)V_j(s)\right| \\
    &= \Bigl|\sum_s p_i(s)\bigl(V_i(s) - V_j(s)\bigr) + \bigl(p_i(s) - p_j(s)\bigr)V_j(s)\Bigr| \\
    &\leq \left|\sum_s p_i(s)(V_i(s) - V_j(s))\right| + \left|\sum_s (p_i(s)- p_j(s))V_j(s)\right| \\
    &\leq \max_s |V_i(s) - V_j(s)| + \|p_i - p_j\|_1 \|V_j\|_\infty. \tag{H\"{o}lder's ineq.}
\end{align*}
Exchanging the roles of $V_i$ and $V_j$ in the decomposition yields the same bound with $\|V_i\|_\infty$ in place of $\|V_j\|_\infty$; taking the minimum of the two concludes the proof. $\hfill\square$

Note that Prop.~\ref{prop:opt_value_gap} holds uniformly over state-action pairs; we apply it with $p_i \equiv {\cal P}_{\bf p}(\cdot|s,a)$ and $V_i(s') \equiv Q^i_i(s',\pi_i(s'))$ (resp., $p_j$ and $V_j$), so that $\|p_i-p_j\|_1 = \|P_{\mathbf p}(\cdot|s,a) - P_{\mathbf q}(\cdot|s,a)\|_1$.
Applying Lem.~\ref{lem:kernel_l1_bound} to bound this quantity by $\kappa_{\bf pq}$ and substituting into Eq.~\ref{eq:l1_value_gap} gives
\begin{equation}
    |\langle p_i, V_i \rangle - \langle p_j, V_j \rangle| \leq \|V_i - V_j\|_\infty + \kappa_{\bf pq} \min(\|V_i\|_\infty,\|V_j\|_\infty).
    \label{eq:kappa_value_gap}
\end{equation}
The coefficient $\kappa_{\bf pq}$ interpolates between the shared-dynamics setting ($\kappa_{\bf pq} \to 0$ as ${\bf q} \to {\bf p}$), where the additional term vanishes, and the model-agnostic worst case ($\kappa_{\bf pq} = 2$, recovered by the trivial bound $\|p_i-p_j\|_1 \le 2$), which matches the bound obtainable without the linear structure of Eq.~\ref{eq:linear_kernel}. 
We use this specialized bound in Lem.~\ref{lem:opt_policy_gap} below.

The following extends Lemma~1 in \citet{barreto2017successor} to tasks with differing dynamics; their result corresponds to the special case $\kappa_{\bf pq} \equiv 2$, obtained under a shared transition kernel.

\begin{lemma}
    \label{lem:opt_policy_gap}
    Given two different MDPs $M_i, M_j \in {\cal M}^{\phi,\varphi}$ with transition dynamics weights $\mathbf{p,q}$, let $\delta_{ij} = \max_{s,a} |r_i(s,a) - r_j(s,a)|$.
    Then,
    \begin{align*}
      &Q^{\pi_i}_i(s,a) - Q^{\pi_j}_i(s,a) \\
      &\quad\leq \frac{1}{(1-\gamma)}\left(2\delta_{ij} + \gamma\kappa_{\bf pq}\min\left(\|Q^{\pi_j}_j\|_\infty,\|Q^{\pi_j}_i\|_\infty\right) + \gamma\kappa_{\bf pq}\min\left(\|Q^{\pi_j}_j\|_\infty,\|Q^{\pi_i}_i\|_\infty\right)\right)  \\
      &\quad\leq \frac{2}{(1-\gamma)}\left(\delta_{ij} + \gamma\kappa_{\bf pq}\|Q^{\pi_j}_j\|_\infty\right).
    \end{align*}
\end{lemma}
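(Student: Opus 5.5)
We follow the structure of Lemma~1 in \citet{barreto2017successor}: insert the value of $\pi_j$ on its own task $M_j$ and split the gap as
\begin{align*}
Q^{\pi_i}_i(s,a) - Q^{\pi_j}_i(s,a) = \bigl(Q^{\pi_i}_i(s,a) - Q^{\pi_j}_j(s,a)\bigr) + \bigl(Q^{\pi_j}_j(s,a) - Q^{\pi_j}_i(s,a)\bigr).
\end{align*}
We bound each bracket uniformly over $(s,a)$ by a fixed-point (contraction) argument. The bound comes from its own Bellman recursion, and Eq.~\ref{eq:kappa_value_gap} handles the differing next-state distributions.

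\textbf{First bracket.} Write $Q^{\pi_i}_i(s,a) = r_i(s,a) + \gamma\langle p_i, V_i\rangle$ with $p_i = P_{\mathbf p}(\cdot|s,a)$ and $V_i(s') = \max_b Q^{\pi_i}_i(s',b)$. Similarly $Q^{\pi_j}_j(s,a) = r_j(s,a) + \gamma\langle p_j, V_j\rangle$ with $p_j = P_{\mathbf q}(\cdot|s,a)$ and $V_j(s') = \max_b Q^{\pi_j}_j(s',b)$. Here $\pi_j$ is optimal for $M_j$.

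The reward difference is at most $\delta_{ij}$. Eq.~\ref{eq:kappa_value_gap} bounds the transition term by $\|V_i - V_j\|_\infty + \kappa_{\mathbf{pq}}\min(\|V_i\|_\infty,\|V_j\|_\infty)$. Since a max is $1$-Lipschitz in sup norm, $\|V_i - V_j\|_\infty \le \|Q^{\pi_i}_i - Q^{\pi_j}_j\|_\infty$, and $\|V_i\|_\infty \le \|Q^{\pi_i}_i\|_\infty$. With $\Delta_1 := \|Q^{\pi_i}_i - Q^{\pi_j}_j\|_\infty$, taking the sup over $(s,a)$ gives
\[
\Delta_1 \le \delta_{ij} + \gamma\Delta_1 + \gamma\kappa_{\mathbf{pq}}\min\bigl(\|Q^{\pi_j}_j\|_\infty,\|Q^{\pi_i}_i\|_\infty\bigr),
\]
which rearranges to a bound with factor $1/(1-\gamma)$.

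\textbf{Second bracket.} Both values use the same policy $\pi_j$. Take $V_j(s') = Q^{\pi_j}_j(s',\pi_j(s'))$ and $V_i'(s') = Q^{\pi_j}_i(s',\pi_j(s'))$, with next-state distributions $P_{\mathbf q}$ and $P_{\mathbf p}$. The same steps give $\Delta_2 \le \delta_{ij} + \gamma\Delta_2 + \gamma\kappa_{\mathbf{pq}}\min(\|Q^{\pi_j}_j\|_\infty,\|Q^{\pi_j}_i\|_\infty)$.

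Adding the two bounds yields the first inequality of the statement. For the second, bound each $\min(\cdot,\cdot)$ by its $\|Q^{\pi_j}_j\|_\infty$ argument and collect the factor $2$.

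\textbf{Main obstacle.} The step needing care is justifying the self-referential inequalities. Finiteness of $\Delta_1$ and $\Delta_2$ follows from bounded rewards, since $\|Q\|_\infty \le \max|r|/(1-\gamma)$, so the rearrangement is legitimate. We must also check that Eq.~\ref{eq:kappa_value_gap} applies with state-value functions induced by $Q$ through a max (first bracket) or a fixed policy (second bracket). If one instead bounds the first bracket one-sidedly, as Barreto et al.\ do, and uses $Q^{\pi_i}_i \ge Q^{\pi_j}_i$, the absolute value could be avoided. The symmetric sup-norm version above is simpler, and it gives exactly the stated constants.
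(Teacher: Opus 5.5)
Your proposal is correct and follows the paper's proof step for step. It uses the same split through $Q^{\pi_j}_j$, the same self-referential sup-norm bounds (the paper's $\Lambda_{ij}$ and $\Lambda'_{ij}$) obtained from the Bellman equations and Eq.~\ref{eq:kappa_value_gap}, rearranged by $1/(1-\gamma)$, summed, and relaxed by replacing each minimum with $\|Q^{\pi_j}_j\|_\infty$. You also make explicit two things the paper uses only implicitly: the non-expansiveness of the max, which gives $\|V_i - V_j\|_\infty \le \Delta_1$ in the first bracket, and the finiteness needed to justify the rearrangement.
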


{\em Proof.}
Following Barreto et al. 2017, the notation is simplified to $Q^j_i(s,a) \equiv Q^{\pi_j}_i(s,a)$. Then,

\begin{align*}
Q^i_i(s,a) - Q^j_i(s,a) &= Q^i_i(s,a) - Q^j_j(s,a) + Q^j_j(s,a)-Q^j_i(s,a) \\
&\leq |Q^i_i(s,a)-Q^j_j(s,a)| + |Q^j_j(s,a)-Q^j_i(s,a)|.
\end{align*}

The proof continues by upper bounding the terms $|Q^i_i(s,a)-Q^j_j(s,a)|$ and $|Q^j_j(s,a)-Q^j_i(s,a)|$, where the underlying MDPs may have different transition kernels and reward functions.
Let $\Lambda_{ij} = \max_{s,a} |Q^i_i(s,a)-Q^j_j(s,a)|$. Then,

\begin{align*}
    &|Q^i_i(s,a) - Q^j_j(s,a)| \\
    &= \left|r_i(s,a) + \gamma \sum_{s'} {\cal P}_i(s'|s,a)Q^i_i(s',\pi_i(s')) 
                                - r_j(s,a) - \gamma \sum_{s'} {\cal P}_j(s'|s,a)Q^j_j(s',\pi_j(s'))\right| \\
    &\leq \left|r_i(s,a) - r_j(s,a)\right| + \left|\gamma \sum_{s'} {\cal P}_i(s'|s,a)Q^i_i(s',\pi_i(s')) - {\cal P}_j(s'|s,a)Q^j_j(s',\pi_j(s'))\right| \\
    &\leq \delta_{ij} + \gamma\left|\sum_{s'} {\cal P}_i(s'|s,a) Q^i_i(s',\pi_i(s')) - {\cal P}_j(s'|s,a) Q^j_j(s',\pi_j(s'))\right|  \\
    &\leq \delta_{ij} + \gamma \left(\Lambda_{ij} + \kappa_{\bf pq}\min(\|Q^i_i\|_\infty,\|Q^j_j\|_\infty)\right). \tag{Eq.~\ref{eq:kappa_value_gap}} \\
\end{align*}

This proves that $\Lambda_{ij} \leq \delta_{ij} + \gamma \left(\Lambda_{ij} + \kappa_{\bf pq}\min(\|Q^i_i\|_\infty,\|Q^j_j\|_\infty)\right)$, hence 
$$
\Lambda_{ij} \leq \frac{1}{(1-\gamma)} \left(\delta_{ij} + \gamma\kappa_{\bf pq}\min(\|Q^i_i\|_\infty, \|Q^j_j\|_\infty)\right).
$$

Now let's follow some similar steps for the second term, $|Q^j_j(s,a)-Q^j_i(s,a)|$.
First, we define $\Lambda'_{ij} = \max_{s,a} |Q^j_j(s,a) - Q^j_i(s,a)|$.
Then,

\begin{align*}
    &|Q^j_j(s,a) - Q^j_i(s,a)| \\
    &= \left| r_j(s,a) + \gamma\sum_{s'}{\cal P}_j(s'|s,a)Q^j_j(s',\pi_j(s')) 
                                      - r_i(s,a) - \gamma\sum_{s'}{\cal P}_i(s'|s,a)Q^j_i(s',\pi_j(s'))\right| \\
    &\leq \left| r_j(s,a) - r_i(s,a)\right| + 
                                    \gamma\left|\sum_{s'} {\cal P}_j(s'|s,a)Q^j_j(s',\pi_j(s')) - 
                                                          {\cal P}_i(s'|s,a)Q^j_i(s',\pi_j(s'))\right|\\
    &\leq \delta_{ij} + \gamma\left(\Lambda'_{ij} + \kappa_{\bf pq}\min(\|Q^j_j\|_\infty,\|Q^j_i\|_\infty)\right). \tag{Eq.~\ref{eq:kappa_value_gap}}
\end{align*}

Similarly, for $\Lambda'_{ij}$, we obtain 
$$
\Lambda'_{ij} \leq \frac{1}{(1-\gamma)} \left(\delta_{ij} + \gamma\kappa_{\bf pq}\min(\|Q^j_j\|_\infty,\|Q^j_i\|_\infty)\right).
$$

Since $Q^i_i(s,a) - Q^j_i(s,a) \leq \Lambda_{ij} + \Lambda'_{ij}$, the proof concludes by merging the known bounds for $\Lambda_{ij}$ and $\Lambda'_{ij}$. $\hfill\square$

\begin{proposition}
    \label{prop:rew_up_bound}
    Let $M_i\in{\cal M}^\phi$ be an MDP whose reward function is structured as $r_i(s,a) = \phi_i(s,a)^\top {\bf w}_i$ for all states and actions. 
    Then,
    $$\|r_i\|_\infty \leq \phi_{\max}\|{\bf w}_i\|.$$
\end{proposition}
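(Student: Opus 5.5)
We need to show $\|r_i\|_\infty \le \phi_{\max}\|\mathbf w_i\|$. The plan is to bound $|r_i(s,a)|$ pointwise and then take the maximum over state-action pairs; the only tools needed are the Cauchy--Schwarz inequality and Lem.~\ref{lem:phimax_bound}.

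First, fix an arbitrary pair $(s,a)\in{\cal S}\times{\cal A}$. Using the structure $r_i(s,a)=\phi_i(s,a)^\top\mathbf w_i$, where $\phi_i(s,a)$ is the expected immediate feature of Def.~\ref{def:exp_immediate_features} under the kernel $P_i$, Cauchy--Schwarz gives
\begin{equation*}
|r_i(s,a)| = |\phi_i(s,a)^\top \mathbf w_i| \leq \|\phi_i(s,a)\|\,\|\mathbf w_i\|.
\end{equation*}

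Second, we remove the task dependence of $\phi_i(s,a)$. Lem.~\ref{lem:phimax_bound} gives $\|\phi_i(s,a)\|\le\phi_{\max}$ uniformly over tasks and state-action pairs. Its proof uses only the triangle inequality together with the fact that $P_i(\cdot|s,a)$ is a probability distribution, which holds for every task in ${\cal M}^{\phi,\varphi}$. Substituting this bound yields $|r_i(s,a)|\le\phi_{\max}\|\mathbf w_i\|$ for every $(s,a)$. Taking the maximum over $(s,a)$ then gives $\|r_i\|_\infty\le\phi_{\max}\|\mathbf w_i\|$.

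There is no real obstacle. The one point needing care is notational: the statement writes ${\cal M}^\phi$, and the kernel $P_i$ may vary across tasks. We should therefore read $\phi_i$ as the kernel-dependent expectation, which is exactly the situation Lem.~\ref{lem:phimax_bound} was stated to cover. This is why the bound is expressed in terms of the raw-feature maximum $\phi_{\max}$ rather than the quantity used by \citet{barreto2017successor}. If one prefers to avoid the lemma, the same bound follows by writing $r_i(s,a)=\sum_{s'}P_i(s'|s,a)\,\phi(s,a,s')^\top\mathbf w_i$ and applying Cauchy--Schwarz inside the convex combination.
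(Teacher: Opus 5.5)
Your proposal is correct and matches the paper's proof step for step: Cauchy--Schwarz on $|\phi_i(s,a)^\top \mathbf{w}_i|$, then Lem.~\ref{lem:phimax_bound} to replace $\|\phi_i(s,a)\|$ by $\phi_{\max}$, then a maximum over $(s,a)$. Your remark that the lemma only needs $P_i(\cdot|s,a)$ to be a probability distribution also correctly handles the statement's loose reference to ${\cal M}^\phi$.
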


{\em Proof.} 
\begin{align*}
    \|r_i\|_\infty &= \max_{s,a} |r_i(s,a)| \\
    &= \max_{s,a} |\phi_i(s,a)^\top {\bf w}_i| \\
    &\leq \max_{s,a} \|\phi_i(s,a)\|\cdot\|{\bf w}_i\| \tag{Cauchy--Schwarz ineq.}\\
    &\leq \phi_{\max} \|{\bf w}_i\|. \tag{Lem.~\ref{lem:phimax_bound}}
\end{align*}

$\hfill\square$

\begin{lemma} 
    \label{lem:max_q_value}
    Let $Q^{\pi_i}_i \equiv Q^i_i$ be the optimal value function in MDP $M_i\in{\cal M}^\phi$
    when following the optimal policy $\pi_i$. Then,
    $$\|Q^i_i\|_\infty \leq \frac{\phi_{\max}\|{\bf w}_i\|}{(1-\gamma)}$$
\end{lemma}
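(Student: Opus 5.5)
The plan is the standard geometric-series argument for a discounted sum of bounded rewards. I apply it to the optimal action-value function of $M_i$, using the reward bound of Prop.~\ref{prop:rew_up_bound}.

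Fix any $(s,a)$. Expand $Q^i_i(s,a)$ as the expected discounted return of $\pi_i$ in $M_i$:
\begin{equation*}
Q^i_i(s,a) = \mathbb{E}^{\pi_i}_{P_i}\left[\sum_{t=0}^\infty \gamma^t r_i(S_t,A_t,S_{t+1}) \,\middle|\, S_0=s, A_0=a\right].
\end{equation*}
By the tower property, conditioning on $(S_t,A_t)$ replaces each raw reward by the expected immediate reward $r_i(S_t,A_t)=\phi_i(S_t,A_t)^\top\mathbf{w}_i$. The expectation over the next state is taken under $P_i$, so it produces the task-specific expected feature $\phi_i$ of Def.~\ref{def:exp_immediate_features}.

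Move the absolute value inside the expectation and the sum; exchanging them is justified because the series converges absolutely for bounded rewards. Each term is then bounded by $\gamma^t\|r_i\|_\infty$. Prop.~\ref{prop:rew_up_bound} gives $\|r_i\|_\infty\le\phi_{\max}\|\mathbf{w}_i\|$, and it already accounts for the task-dependent dynamics through Lem.~\ref{lem:phimax_bound}. Summing the geometric series gives
\begin{equation*}
|Q^i_i(s,a)|\le \phi_{\max}\|\mathbf{w}_i\|\sum_{t\ge0}\gamma^t=\frac{\phi_{\max}\|\mathbf{w}_i\|}{1-\gamma}.
\end{equation*}
Taking the maximum over $(s,a)$ yields the claim.

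An alternative route avoids series manipulation and uses the Bellman fixed point. Since $Q^i_i(s,a)=r_i(s,a)+\gamma\sum_{s'}P_i(s'|s,a)Q^i_i(s',\pi_i(s'))$, we get $\|Q^i_i\|_\infty\le \|r_i\|_\infty+\gamma\|Q^i_i\|_\infty$. Rearranging is valid because $Q^i_i$ is bounded, as is every finite-MDP value. This gives the same bound, and it matches the style of the $\Lambda_{ij}$ arguments in Lem.~\ref{lem:opt_policy_gap}.

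The only delicate point is the expected reward. Because the dynamics differ across tasks, the reward must be bounded through $\phi_i(s,a)$ under $M_i$'s own kernel, not through a shared $\phi(s,a)$. That is exactly what Lem.~\ref{lem:phimax_bound} supplies, so no real obstacle remains.
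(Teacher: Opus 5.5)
Your proof is correct. Your fallback argument via the Bellman fixed point is exactly the paper's proof. The paper writes $Q^i_i(s,a)=r_i(s,a)+\gamma\sum_{s'}P_i(s'|s,a)\max_b Q^i_i(s',b)$, bounds the reward term with Prop.~\ref{prop:rew_up_bound}, obtains $\|Q^i_i\|_\infty\le\phi_{\max}\|\mathbf{w}_i\|+\gamma\|Q^i_i\|_\infty$, and rearranges.

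Your primary route, unrolling $Q^i_i$ into the discounted series and summing $\gamma^t\phi_{\max}\|\mathbf{w}_i\|$, is a genuinely different (if standard) argument. It buys two small things:
\begin{itemize}
\item It does not need finiteness of $\|Q^i_i\|_\infty$ in advance. The paper's rearrangement uses this implicitly; it is harmless in finite MDPs, as you note.
\item It makes plain that the bound holds for every policy evaluated on $M_i$, since the optimality of $\pi_i$ is never used.
\end{itemize}
The Bellman route is shorter and matches the fixed-point style of the $\Lambda_{ij}$ bounds in Lem.~\ref{lem:opt_policy_gap}.

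The point you flag as delicate is less so than you suggest. Because $\phi_{\max}$ is defined over raw triplets, Cauchy--Schwarz gives $|\phi(s,a,s')^\top\mathbf{w}_i|\le\phi_{\max}\|\mathbf{w}_i\|$ pointwise. So in the series route you can bound each per-transition term directly and skip both the tower-property step and Lem.~\ref{lem:phimax_bound}.
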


{\em Proof.}
\begin{align*}
    \|Q^i_i\|_\infty &= \max_{s,a} |r_i(s,a) + \gamma \sum_{s'} {\cal P}_i(s'|s,a) \max_b Q^i_i(s',b)| \\
    &\leq \max_{s,a} |r_i(s,a)| + \gamma \sum_{s'}{\cal P}_i(s'|s,a) |\max_b Q^i_i(s',b)| \\
    &\leq \phi_{\max} \|{\bf w}_i\| + \gamma \max_{s,a}  \sum_{s'}{\cal P}_i(s'|s,a) |\max_b Q^i_i(s',b)|  \tag{Prop.~\ref{prop:rew_up_bound}}\\
    &\leq \phi_{\max} \|{\bf w}_i\| + \gamma \|Q^i_i\|_\infty.
\end{align*}

The result is $\|Q^i_i\|_\infty \leq \phi_{\max} \|{\bf w}_i\| + \gamma \|Q^i_i\|_\infty$, hence, rearranging the terms concludes the proof. $\hfill\square$

\begin{proposition}
    \label{prop:approx_error}
    Given two tasks $M_{\bf z,q}, M_{\bf w,p} \in {\cal M}^{\phi,\varphi}$, let $\epsilon = \max_{s,a} |Q^{\pi_{{\bf z,q}}}_{\bf w,p}(s,a) - \tilde{Q}^{\pi_{{\bf z,q}}}_{{\bf w,p}}(s,a)|$ be the approximation error of the action-value function when applying in task $M_{\bf w,p}$ the optimal policy for task $M_{\bf z,q}$ (i.e., $\pi_{\bf z,q}$).
    Then, $$\epsilon \leq \|{\bf w}\| \max_{s,a} \|\tau(s,a,{\bf z},{\bf q}) - \tilde{\tau}(s,a,{\bf z},{\bf q})\|.$$
\end{proposition}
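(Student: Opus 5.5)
The plan is to use the bilinear form of both the exact and the approximate action-value functions and then a single Cauchy--Schwarz step, with Prop.~\ref{prop:p_norm} removing the dynamics weights. By Eq.~\ref{eq:linear_q_values}, for every $(s,a)$ the true value of $\pi_{\bf z,q}$ on $M_{\bf w,p}$ is $Q^{\pi_{\bf z,q}}_{\bf w,p}(s,a) = {\bf p}^\top \tau(s,a,{\bf z},{\bf q}){\bf w}$. By the statement of Thm.~\ref{thm:gpi_robust_sf}, the approximation is $\tilde Q^{\pi_{\bf z,q}}_{\bf w,p}(s,a) = {\bf p}^\top \tilde\tau(s,a,{\bf z},{\bf q}){\bf w}$. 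Subtracting gives
\begin{align*}
Q^{\pi_{\bf z,q}}_{\bf w,p}(s,a) - \tilde Q^{\pi_{\bf z,q}}_{\bf w,p}(s,a) = {\bf p}^\top\bigl(\tau(s,a,{\bf z},{\bf q}) - \tilde\tau(s,a,{\bf z},{\bf q})\bigr){\bf w}.
\end{align*}

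Next I bound this bilinear form. Write $\Delta(s,a) := \tau(s,a,{\bf z},{\bf q}) - \tilde\tau(s,a,{\bf z},{\bf q}) \in \mathbb{R}^{k\times d}$. Setting ${\bf u} := \Delta(s,a)^\top{\bf p} \in \mathbb{R}^d$, Cauchy--Schwarz gives $|{\bf u}^\top{\bf w}| \le \|{\bf u}\|\,\|{\bf w}\|$. Then $\|\Delta^\top{\bf p}\| \le \|\Delta\|\,\|{\bf p}\|$, where $\|\Delta\|$ is read as the matrix norm induced by the $\ell_2$ norm (or the Frobenius norm, which dominates it, so the bound holds either way). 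Since ${\bf p}$ lies in the simplex, Prop.~\ref{prop:p_norm} gives $\|{\bf p}\|\le 1$, so ${\bf p}$ drops out. Taking the maximum over $(s,a)$ on both sides yields $\epsilon \le \|{\bf w}\|\max_{s,a}\|\Delta(s,a)\|$.

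The main obstacle is the first step: the identity $Q^{\pi_{\bf z,q}}_{\bf w,p} = {\bf p}^\top\tau(\cdot,\cdot,{\bf z},{\bf q}){\bf w}$ with the \emph{exact} $\tau$. This requires $\tau$ to be the robust SF whose contraction with the query dynamics ${\bf p}$ returns the successor features of $\pi_{\bf z,q}$ under $P_{\bf p}$, rather than under $P_{\bf q}$. I would make this explicit as the interpretation of $\tau$ used in Eq.~\ref{eq:linear_q_values} and Thm.~\ref{thm:gpi_robust_sf}, namely that ${\bf p}^\top\tau(s,a,{\bf z},{\bf q})$ equals $\psi^{\pi_{\bf z,q}}(s,a)$ in $M_{\bf w,p}$. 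If that reading is not granted, the fallback is to \emph{define} the target of approximation this way. In that case the proposition still holds verbatim, since the remaining steps are purely linear-algebraic.
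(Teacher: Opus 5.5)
Your proposal is correct and matches the paper's proof essentially step for step: both write the two values as ${\bf p}^\top\tau{\bf w}$ and ${\bf p}^\top\tilde\tau{\bf w}$, apply Cauchy--Schwarz with the spectral (induced $\ell_2$) norm of $\tau-\tilde\tau$, and remove ${\bf p}$ using $\|{\bf p}\|\le 1$ from Prop.~\ref{prop:p_norm}. The paper asserts the identity $Q^{\pi_{\bf z,q}}_{\bf w,p}={\bf p}^\top\tau(\cdot,\cdot,{\bf z},{\bf q}){\bf w}$, which you flag as the main obstacle, without comment in the second line of its proof (relying on Eq.~\ref{eq:linear_q_values}), so making it an explicit premise, as you do, is the more careful version of the same argument.
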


{\em Proof}.

\begin{align*}
    \epsilon &= \max_{s,a} |Q^{\pi_{{\bf z,q}}}_{\bf w,p}(s,a) - \tilde{Q}^{\pi_{{\bf z,q}}}_{{\bf w,p}}(s,a)| \\
    &= \max_{s,a} |{\bf p}^\top \tau(s,a,{\bf z},{\bf q}){\bf w} - {\bf p}^\top\tilde{\tau}(s,a,{\bf z},{\bf q}){\bf w}| \\
    &= \max_{s,a} |{\bf p}^\top (\tau(s,a,{\bf z},{\bf q}) - \tilde{\tau}(s,a,{\bf z},{\bf q})){\bf w}| \\
    &\leq \max_{s,a} \|{\bf p}\|\cdot \|\tau(s,a,{\bf z},{\bf q}) - \tilde{\tau}(s,a,{\bf z},{\bf q})\| \cdot \|{\bf w}\| \tag{Cauchy--Schwarz ineq.}\\
    &= \|{\bf p}\|\cdot \|{\bf w}\| \cdot \max_{s,a} \|\tau(s,a,{\bf z},{\bf q}) - \tilde{\tau}(s,a,{\bf z},{\bf q})\|  \\
    &\leq \|{\bf w}\| \max_{s,a} \|\tau(s,a,{\bf z},{\bf q}) - \tilde{\tau}(s,a,{\bf z},{\bf q})\|  \tag{Prop.~\ref{prop:p_norm}}
\end{align*}

Note that we can apply Prop.~\ref{prop:p_norm} since $\mathbf{p}$ lies in the probability simplex (Rem.~\ref{rem:notation}).
Also note that $\|T(s, a, {\bf z}, {\bf q})\|$ for $T(s, a, {\bf z}, {\bf q}) = \tau(s, a, {\bf z}, {\bf q}) - \tilde{\tau}(s, a, {\bf z}, {\bf q})$ refers to the spectral norm of the resulting matrix $T$, which corresponds to its largest singular value, concluding the proof. $\hfill\square$

\begin{proposition}
	\label{prop:max_reward_diff}
	Given two MDPs $M_i,M_j\in{\cal M}^{\phi,\varphi}$ with reward functions $r_i, r_j$, task weights $\mathbf{w, z}$ and transition kernel weights $\mathbf{p, q}$, respectively.
	Then, the maximum difference in their reward functions in any state-action pair is:
	$$
	\max_{s,a}|r_i(s,a) - r_j(s,a)| \leq \phi_{\max} \|\mathbf{w-z}\| + \kappa_{\mathbf{pq}}\phi_{\max}\min\left(\|\mathbf{w}\|, \|\mathbf{z}\|\right).
	$$
\end{proposition}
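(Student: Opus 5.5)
The plan is to write each expected reward as an expectation of raw features under its own kernel. By the definition of expected immediate reward, $r_i(s,a) = \sum_{s'} P_{\mathbf p}(s'|s,a)\,\phi(s,a,s')^\top\mathbf w$ and $r_j(s,a) = \sum_{s'} P_{\mathbf q}(s'|s,a)\,\phi(s,a,s')^\top\mathbf z$. The difference then mixes a reward-weight mismatch with a kernel mismatch. I will separate them by adding and subtracting a cross term. Which cross term I choose decides whether $\|\mathbf w\|$ or $\|\mathbf z\|$ multiplies $\kappa_{\mathbf{pq}}$, and taking the better of the two choices produces the $\min$.

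\emph{First decomposition.} Insert $\sum_{s'} P_{\mathbf p}(s'|s,a)\,\phi(s,a,s')^\top\mathbf z$. Then
\begin{align*}
r_i(s,a)-r_j(s,a) &= \sum_{s'} P_{\mathbf p}(s'|s,a)\,\phi(s,a,s')^\top(\mathbf w-\mathbf z) \\
&\quad + \sum_{s'}\bigl(P_{\mathbf p}(s'|s,a)-P_{\mathbf q}(s'|s,a)\bigr)\,\phi(s,a,s')^\top\mathbf z .
\end{align*}
\begin{itemize}
\item The first term equals $\phi_i(s,a)^\top(\mathbf w-\mathbf z)$. By Cauchy--Schwarz and Lem.~\ref{lem:phimax_bound}, it is at most $\phi_{\max}\|\mathbf w-\mathbf z\|$ in absolute value.
\item For the second term, apply H\"older's inequality with the $\ell_1$ norm on the kernel difference and the sup norm on $s'\mapsto\phi(s,a,s')^\top\mathbf z$. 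Each value of that function is at most $\phi_{\max}\|\mathbf z\|$ by Cauchy--Schwarz and Def.~\ref{def:max_reward_features}. This is exactly where the raw-feature definition of $\phi_{\max}$ is needed. The $\ell_1$ factor is bounded by $\kappa_{\mathbf{pq}}$ via Lem.~\ref{lem:kernel_l1_bound}.
\end{itemize}
Together these give $\phi_{\max}\|\mathbf w-\mathbf z\| + \kappa_{\mathbf{pq}}\phi_{\max}\|\mathbf z\|$.

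\emph{Second decomposition.} Insert instead $\sum_{s'} P_{\mathbf q}(s'|s,a)\,\phi(s,a,s')^\top\mathbf w$. The weight-difference term is now averaged under $P_{\mathbf q}$, and it is again bounded by $\phi_{\max}\|\mathbf w-\mathbf z\|$ via Lem.~\ref{lem:phimax_bound} applied to $M_j$. The kernel-difference term now carries $\mathbf w$, so it is bounded by $\kappa_{\mathbf{pq}}\phi_{\max}\|\mathbf w\|$.

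Both bounds hold for every $(s,a)$, so each holds after taking $\max_{s,a}$. Taking the smaller of the two gives the stated $\min(\|\mathbf w\|,\|\mathbf z\|)$.

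There is no real obstacle here; the care needed is bookkeeping. The point to watch is that the kernel-difference term needs a pointwise bound on $\phi(s,a,s')^\top\mathbf z$ for each $s'$, not just on its expectation. The expected-feature bound of \citet{barreto2017successor} would not suffice, and this is why $\phi_{\max}$ is defined over raw triplets.
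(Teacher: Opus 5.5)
Your proposal is correct and follows the paper's proof essentially step for step. You add and subtract the same cross term $\sum_{s'}P_{\mathbf p}(s'|s,a)\,\phi(s,a,s')^\top\mathbf z$, bound the weight-mismatch term by $\phi_{\max}\|\mathbf w-\mathbf z\|$, and bound the kernel-mismatch term by pairing the $\ell_1$ norm with the sup norm, using Lem.~\ref{lem:kernel_l1_bound} and the raw-feature $\phi_{\max}$. The only difference is presentational: you write out the second decomposition (inserting $\sum_{s'}P_{\mathbf q}(s'|s,a)\,\phi(s,a,s')^\top\mathbf w$) explicitly, whereas the paper invokes symmetry to obtain the $\min(\|\mathbf w\|,\|\mathbf z\|)$.
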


{\em Proof}.
\begin{align*}
	&\max_{s,a}|r_i(s,a) - r_j(s,a)| \\
	&=\max_{s,a}\Big|\sum_{s'} P_i(s'|s,a)\phi(s,a,s')^\top\mathbf{w} - \sum_{s'} P_i(s'|s,a)\phi(s,a,s')^\top\mathbf{z} \ +\\ 
	&\qquad\qquad \sum_{s'} P_i(s'|s,a)\phi(s,a,s')^\top\mathbf{z} - \sum_{s'} P_j(s'|s,a)\phi(s,a,s')^\top\mathbf{z}\Big| \\
	&=\max_{s,a}\Big|\sum_{s'} P_i(s'|s,a)\phi(s,a,s')^\top(\mathbf{w - z}) + \\ 
	&\qquad\qquad\sum_{s'} \left(P_i(s'|s,a) - P_j(s'|s,a)\right)\phi(s,a,s')^\top\mathbf{z}\Big| \\
	&\leq \phi_{\max}\|\mathbf{w-z}\| + \max_{s,a}\Big|\sum_{s'} \left(P_i(s'|s,a) - P_j(s'|s,a)\right)\phi(s,a,s')^\top\mathbf{z}\Big| \tag{Prop.~\ref{prop:rew_up_bound}}\\
	&\leq \phi_{\max}\|\mathbf{w-z}\| + \max_{s,a}\sum_{s'} \big|P_i(s'|s,a) - P_j(s'|s,a)\big|\cdot\big|\phi(s,a,s')^\top\mathbf{z}\big| \tag{Triangle ineq.}\\
	&\leq \phi_{\max}\|\mathbf{w-z}\| + \max_{s,a}\Big[\sum_{s'}\big|P_i(s'|s,a) - P_j(s'|s,a)\big| \cdot \max_{s''}\big|\phi(s,a,s'')^\top\mathbf{z}\big|\Big]\\
	&= \phi_{\max}\|\mathbf{w-z}\| + \max_{s,a}\Big[\big\|P_i(\cdot|s,a) - P_j(\cdot|s,a)\big\|_1 \cdot \max_{s''}\big|\phi(s,a,s'')^\top\mathbf{z}\big|\Big]\\
	&\leq \phi_{\max}\|\mathbf{w-z}\| + \kappa_{\mathbf{pq}} \max_{s,a,s''}\big|\phi(s,a,s'')^\top\mathbf{z}\big| \tag{Lem.~\ref{lem:kernel_l1_bound}}\\
	&\leq \phi_{\max}\|\mathbf{w-z}\| + \kappa_{\mathbf{pq}} \phi_{\max} \|\mathbf{z}\| \tag{Cauchy--Schwarz ineq., Def.~\ref{def:max_reward_features}}
\end{align*}
By symmetry, this also holds for expanding with $\sum_{s'} P_j(s'|s,a)\phi(s,a,s')^\top\mathbf{w}$, which gives us the minimum between $\mathbf{w}$ and $\mathbf{z}$ in the last term. $\hfill\square$

\subsection*{Proof of Theorem~\ref{thm:gpi_robust_sf}}
\label{thm:gpi_robust_sf_proof}

{\em Proof.}
First let's simplify the task indexes to $i \equiv ({\bf w,p})$ for the new task, and
$j\equiv ({\bf z,q})$ to refer to any of the tasks in ${\cal C}$, and $Q^{\pi_j}_j \equiv Q^j_j$ to the
optimal value function when applying the optimal policy in the MDP $M_j\in{\cal M}^{\phi,\varphi}$.
Then,

\begin{align*}
    &Q^*_i(s,a) - Q^\pi_i(s,a) \leq Q^*_i(s,a) - {\max}_j Q^{\pi_j}_i(s,a) + \frac{2}{1-\gamma}\epsilon \tag{Thm.~1, \citet{barreto2017successor}}\\
    &\leq \frac{2}{1-\gamma} (\delta_{ij} + \gamma\kappa_{\bf pq}\|Q^j_j\|_\infty + \epsilon)  \tag{Lem.~\ref{lem:opt_policy_gap}} \\
    &\leq \frac{2}{1-\gamma}\left(\delta_{ij} + \frac{\gamma}{1-\gamma} \kappa_{\bf pq}\phi_{\max} \|{\bf z}\| + \epsilon\right) \tag{Lem.~\ref{lem:max_q_value}} \\
    &= \frac{2}{1-\gamma} \left(\max_{s,a}|r_i(s,a) - r_j(s,a)| + \frac{\gamma}{1-\gamma} \kappa_{\bf pq}\phi_{\max} \|{\bf z}\| + \epsilon \right) \\
    &\leq \frac{2}{1-\gamma} \left(\phi_{\max} \|{\bf w} - {\bf z}\| + \kappa_{\mathbf{pq}} \phi_{\max} \|\mathbf{z}\| + \frac{\gamma}{1-\gamma} \kappa_{\bf pq}\phi_{\max} \|{\bf z}\| + \epsilon\right) \tag{Prop.~\ref{prop:max_reward_diff}}\\
    &= \frac{2}{1-\gamma}\phi_{\max}\left(\|{\bf w} - {\bf z}\| + \frac{\kappa_{\bf pq}}{1-\gamma} \|{\bf z}\|\right) + \frac{2}{1-\gamma}\epsilon\\
    &\leq \frac{2}{1-\gamma}\phi_{\max}\left(\|{\bf w} - {\bf z}\| + \frac{\kappa_{\bf pq}}{1-\gamma}\|{\bf z}\|\right) + \\
    &\qquad \frac{2}{1-\gamma}\|{\bf w}\| \max_{s',a'}\|\tau(s',a',{\bf z},{\bf q})-\tilde{\tau}(s',a',{\bf z},{\bf q})\| \tag{Prop.~\ref{prop:approx_error}}
\end{align*}
Note that it is valid to apply Lem.~\ref{lem:max_q_value} above since $\pi_j$ is the optimal policy for $M_j$ by Rem.~\ref{rem:notation}.~$\hfill\square$

In conclusion, the gap between the GPI policy and optimal policy for a new task is upper-bounded by three terms.
The first measures the distance between the reference task and the new task; the second measures the distance of the transition kernels scaled by the reference task ``size''; and the last term denotes the approximation error, which includes both the error of approximating the reward and the transition kernel. 
As justified in \citet{barreto2017successor}, proving this upper bound is a stronger guarantee, quantifying the loss w.r.t. the optimal policy in the new task instead of comparing the performance of $\pi$ with the previous computed policies.

%% file: 6_related_work.tex
\section{Related Work}
\label{sec:related_work}

Previous approaches to transfer in RL have heavily relied on state abstractions or reward-centric generalizations.
For instance, \citet{lehnert2020transfer} propose computing a set of reward-predictive representations for model reduction that induce a state partition based on bisimulation. While this yields model features that can be transferred across varying reward and transition functions, these abstract spaces are applied solely to states rather than transitions, utilizing the same abstraction for both functions. More closely related to our approach are methods leveraging Successor Features (SFs) and Generalized Policy Improvement (GPI). 
\citet{barreto2018transfer} show how the SF\&GPI framework \citep{barreto2017successor} can generalize to arbitrary reward functions beyond simple linear combinations, formalizing the feature learning as a supervised problem. 
Similarly, \citet{ma2020universal} explored a combination of goal-conditioned RL with SFs to transfer approximations across different goals. 
However, a critical limitation of both approaches is their reliance on shared underlying dynamics; generalizing across goals or arbitrary rewards is fundamentally a variation of reward transfer, rendering them inapplicable to environments with shifting transition kernels.

To bypass some limitations of standard SFs, alternative methods like \citet{ingebrand2024zero} have demonstrated zero-shot capabilities using combinations of non-linear basis functions, termed function encoders. These are represented with a vector of learned coefficients that, in addition to states and actions, serve as inputs to RL algorithms. While this framework assumes the availability of appropriate data to learn the encoding in a supervised manner and has been evaluated on tasks such as predicting transition functions and multi-task RL transfer, it does not explicitly study generalization under combined perturbations in both the transition and reward functions.

Finally, handling variations in both rewards and dynamics has been explored extensively within Multitask Offline RL. \citet{ishfaq2024offline} analyzed this context by gathering input data from various behavior policies across a set of tasks to learn a shared representation, assuming the tasks are low-rank MDPs. Other notable representation learning methods in this space include TD-JEPA \citep{bagatella2026td}, which leverages latent-predictive representations for zero-shot reinforcement learning, and HILP \citep{park2024foundation}, which learns foundation policies using Hilbert representations from offline datasets. While these works offer transfer guarantees for different rewards and transition kernels, they tend to be more computationally expensive than SFs due to mechanisms like low-rank factorization with pessimistic regularization or complex latent predictive models. In contrast, SFs are efficiently trained using standard TD-learning and offer superior zero-shot transfer capabilities for novel tasks under the stronger assumption of linear reward functions.

%% file: app_D_few_shot_experments.tex
\section{Few-Shot Learning}
\label{sec:apdx-few-shot}

This section investigates the few-shot adaptation capabilities of our approach, assessing how quickly the models can recover an optimal policy in novel, unseen environments where the true reward weights $\textbf{w}$ and transition dynamics parameters $\textbf{p}$ are initially unknown to the agent. 
Unlike the zero-shot setting, the agent must estimate these parameters dynamically from limited interactions with the environment. 

We compare the USFA, Robust RL, and Robust SF settings against a standard Double DQN (DDQN) baseline trained entirely from scratch. 
The evaluation metric is the cumulative average return over the first $50$ steps (Figure~\ref{fig:aima-experiments-returns-cum}). 
This isolates the sample efficiency of leveraging a generalized model versus learning a distinct model for every new task.

\begin{figure}[htbp]
    \centering
    \includegraphics[width=0.9\textwidth]{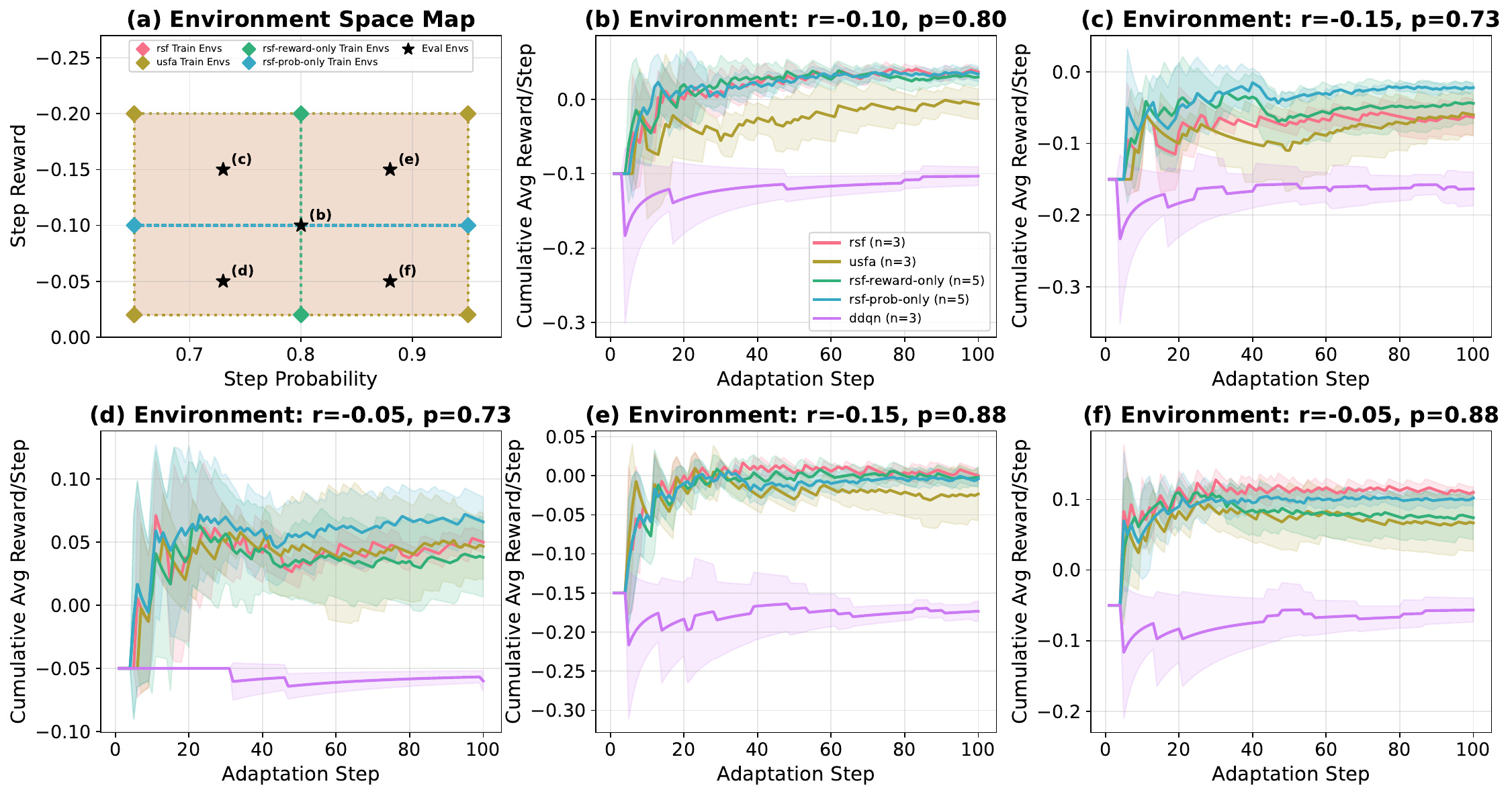}

    \caption{Cumulative average returns during few-shot adaptation in unseen environments. Robust SFs rapidly adapt by estimating $\textbf{w}$ and $\textbf{p}$ from limited interactions, substantially outperforming the DDQN baseline trained from scratch.}
    \label{fig:aima-experiments-returns-cum}
\end{figure}

As shown, robust successor features exhibit rapid adaptation across all unseen evaluation environments, while the USFA and Robust RL settings only adapt to some of them.
The baseline DDQN expectedly shows minimal improvement in this restricted step range.
By leveraging the generalized model $\tilde{\tau}$ captured during the initial training phase, the agent can rapidly estimate the new dynamics and reward structures, yielding near-optimal returns in a fraction of the time required by the DDQN baseline.

Ultimately, this fast few-shot adaptation opens the door to addressing more complex, high-dimensional tasks, such as those found in robotics, where the underlying representations $\phi$ and $\varphi$ must also be learned simultaneously.

All the results reported in this work, including the zero-shot experiments, use the scalar TD-error (Eq.~\ref{eq:td_error}) to learn $\tau$/$\psi$.
Extending this to the vector (or matrix) TD-error is ongoing work.